\def\and{%
  \end{tabular}%
  \hskip 0.27em \@plus.17fil\relax
  \begin{tabular}[t]{c}}
\tikzstyle{bag} = [align=center]
\declaretheoremstyle[%
  spaceabove=-4pt,%
  spacebelow=6pt,%
  headfont=\normalfont\itshape,%
  postheadspace=1em,%
  qed=\qedsymbol%
]{mystyle}
\newtheorem{theorem}{Theorem}
\newtheorem{definition}{Definition}
\newtheorem{proposition}{Proposition}
\newtheorem{lemma}{Lemma}
\newtheorem{corollary}{Corollary}
\newtheorem{fact}{Fact}
\newcommand{\Nat}[0]{\mathbb{N}}
\title{Who is Afraid of Minimal Revision?}
\author{Edoardo Baccini 
\institute{University of Groningen}
\email{e.baccini@rug.nl}
\and
Zo\'{e} Christoff
\institute{University of Groningen}
\email{z.l.christoff@rug.nl}
\and
Nina Gierasimczuk
\institute{Technical University of Denmark}
\email{
nigi@dtu.dk}
\and
Rineke Verbrugge
\institute{University of Groningen}
\email{l.c.verbrugge@rug.nl}
}
\begin{document}
\maketitle

\begin{abstract}
The principle of minimal change in belief revision theory requires that, when accepting new information, one keeps one's belief state as close to the initial belief state as possible. This is precisely what the method known as minimal revision does. However, unlike less conservative belief revision methods, minimal revision falls short in learning power: It cannot learn everything that can be learned by other learning methods. We begin by showing that, despite this limitation, minimal revision is still a successful learning method in a wide range of situations. Firstly, it can learn any problem that is finitely identifiable. Secondly, it can learn with positive and negative data, as long as one considers finitely many possibilities. We then  characterize the prior plausibility assignments (over finitely many possibilities) that enable one to learn via minimal revision, and do the same for conditioning and lexicographic upgrade. Finally, we show that not all of our results still hold when learning from possibly erroneous information.
\end{abstract}

\section{Introduction}

Many human activities aim to find the truth. 
Consider, for instance, a scientist deciding which of two possible theories, $T_1$ or $T_2$, is the correct one.
At the start of their inquiry, the scientist might have some \textit{prior plausibility ordering} that reflects their preference among the two. Over time, new experimental results and observations become available to the scientist, who revises their assessment accordingly using some \textit{revision method}. How should the scientist revise their prior assessment? Some inspiration about the structure of that dynamics can be drawn from belief revision theory. In particular, one natural way to revise one's beliefs is to accept 
the newly learned information but change the overall belief state as little as possible. This \textit{minimal revision} method (also known as ``conservative upgrade'') was first introduced in \cite{Boutilier:1992}, where it was shown that the method is consistent with the basic AGM principles of (iterated) belief revision \cite{AGM}, that it naturally accommodates the Ramsey test for conditionals \cite{Stalnaker1968-STAATO-5}, and that, while continually updating the propositional beliefs, it preserves as many as possible of the extended, conditional beliefs. 

Despite its natural appeal, minimal revision falls short in its \textit{learning power}, i.e., in its ability to successfully lead an agent to identify the true state of the world among a set of possibilities.
In particular, using tools from formal learning theory, \cite{kelly1999iterated} and \cite{NinaPhD,BGStruth-tracking,baltag2019truth} show that, unlike the more radical belief revision methods of \textit{conditioning} (also known as ``update'') and \textit{lexicographic revision} (also known as ``radical upgrade''), it does not excel as a learning method, because it suffers from memory limitations \cite{NinaPhD} and has a tendency to fall into cycles \cite{BaltagSmets:2011-Keep-Changing-Your-Beliefs-Aiming-for-the-Truth}. 
Given these negative results, other revision methods appear \emph{in general} preferable to minimal revision to track the truth. However, 
the much simpler minimal revision could still be the right method  
when considering \textit{particular classes} of learning scenarios. Building on the framework introduced in \cite{NinaPhD,BGStruth-tracking,baltag2019truth}, we show that this is indeed the case.

After recalling the framework from \cite{baltag2019truth} in Section~\ref{sec:preliminary}, we show in Section~\ref{sec:min} that: Minimal revision can successfully learn any problem that is finitely identifiable (Section~\ref{sec:dfftmin}); It can learn on positive and negative data if the possibilities considered  are finite (Section~\ref{sec:pnmin}); Only some prior plausibility assessments can lead minimal revision to success, and we  characterize the class of such plausibility assessments when one considers finitely many possibilities (Section~\ref{sec:min_plaus}); Not all our results are preserved when learning by minimal revision with possibly erroneous observations (Section~\ref{sec:learning_errors}). In Section~\ref{sec:stronger}, we consider conditioning and lexicographic revision and characterize the prior plausibility assessments which allow one to successfully learn via these methods. We conclude in Section~\ref{sec:conclusion}.

\section{Background and Preliminary Definitions}
\label{sec:preliminary}

We consider a discrete-time iterative process ordered like the natural numbers: At each time-step, an agent receives an observation from the world, on the basis of which they revise their plausibility order and make a conjecture about which state is the actual one.  
The aim of the agent is to correctly identify the actual state of the world. Below, we recall the framework developed in \cite{NinaPhD,BGStruth-tracking,baltag2019truth}. 

\begin{definition}[\textrm{\cite[Def.~1]{baltag2019truth}}]
     An \textbf{epistemic space} $\textbf{S}$ is a pair $\langle S, O\rangle$, where $S$ is a set of possible worlds (at most countable), and $O\subseteq\mathcal{P}(S)$ is a set of observables (at most countable). For every $s\in S$, $O_s$ denotes the set of observables $p\in O$ such that $s\in p$. 
     For any two worlds $s,t\in S$, if $O_s=O_t$ then $s=t$ (no two worlds satisfy exactly the same set of observables). An epistemic space is said to be finite if $|S|<|\mathbb{N}|$.
     \label{def:epistemic_spaces}
\end{definition}

The set of states 
represents the possibilities entertained by an agent, e.g.,  
 theories a scientist considers  possible. The set of observables represents the observations the agent might come across. 
 Figure~\ref{fig:min_not_learnable} provides an example of an epistemic space (on the left).
To represent an agent's doxastic state, we enrich the space with a plausibility relation to obtain a plausibility space (Figure~\ref{fig:min_not_learnable}, right) \cite{Grove1988-GROTMF, Boutilier:1992, Baltag2016smetss}. %
%

\begin{definition}[\textrm{\cite[Def.~$2$]{baltag2019truth}}]
     A \textbf{plausibility space} $\textbf{B}$ is the tuple $\langle \textbf{S},\preceq\rangle$ where $\textbf{S}=\langle S, O\rangle$ is an epistemic space and $\preceq$ is a total preorder, i.e., a total, transitive and reflexive relation over $S$. 
     \label{def:plausibility_space}
\end{definition}

\noindent For $s,t\in S$, $s\preceq t$ means that ``$s$ is at least as plausible as $t$''. We 
write $s\prec t$ if $s\preceq t \textrm{ and }t\not\preceq s$; $s\simeq t$ if $s\preceq t$ and $t\preceq s$. Since in Section~\ref{sec:stronger} we will also consider preorders that are not necessarily total, we define additional notation: $s\sim t$ if $ s\preceq t$ or $t\preceq s$; $s\not\sim t$ if not $s\sim t$.  

The assumption that the preorder be total is common in the dynamic epistemic logic literature and is rooted in Grove's semantics \cite{Grove1988-GROTMF}, inspired in turn by Lewis' sphere system for counterfactuals \cite{lewis_counterfactuals_1973}. Plausibility spaces allow for a simple representation of belief as truth in the most plausible worlds \cite{Baltag2016smetss}: An agent believes a proposition $p$ iff $p$ is true in all states that the agent considers most plausible. 

\begin{definition}
Let a preorder $\preceq$ (not necessarily total) over a set of elements $S$ be given. An element $s\in S$ is a \textbf{$\preceq$-minimal element} of $S$ iff for all $t\in S$ such that $t\sim s$, $s\preceq t$. We denote by $min_\preceq X$ the set of $\preceq$-minimal elements of the set $X\subseteq S$.
For every $p\in O$, we define the preorders $\preceq_p: = {\preceq} \cap (p\times p)$ and $\preceq_{\bar{p}}:={\preceq} \cap (S\setminus p \times S\setminus p)$.
\end{definition}
The observations that an agent receives (in a step-wise manner)  form an infinite stream:
 \begin{figure}
     \centering
\scalebox{1.05}{\fbox{\begin{tikzpicture}

\coordinate (A) at (0.50,0);
\coordinate (B) at (-0.50,0);

    \draw[thick, rotate = 45] (A) ellipse (1 and 0.5) ;
    \draw[thick, rotate = -45] (B) ellipse (1 and 0.5);
\filldraw[black] (0,-0.5) circle (2pt);
\filldraw[black] (0.90,0.5) circle (2pt);
\filldraw[black] (-0.90,0.5) circle (2pt);
    \node at (1.5,0) {$q$};
\node at (-1.5,0) {$p$};
\node at (0,-0.3) {$s$};
\node at (0.90,0.25) {$t$};
\node at (-0.90,0.25) {$u$};
\end{tikzpicture}}
\hspace{0.7cm}
\fbox{\begin{tikzpicture}

\coordinate (A) at (0.50,0);
\coordinate (B) at (-0.50,0);
\draw[thick, ->] (-0.1, -0.45) -- (-0.8, 0.45) ; 
\draw[thick, ->] (0.1, -0.45) -- (0.8, 0.45) ; 
\draw[thick, ->] (-0.75, 0.54) -- (0.75, 0.54) ; 
    \draw[thick, rotate = 45] (A) ellipse (1 and 0.5) ;
    \draw[thick, rotate = -45] (B) ellipse (1 and 0.5);
\filldraw[black] (0,-0.5) circle (2pt);
\filldraw[black] (0.90,0.5) circle (2pt);
\filldraw[black] (-0.90,0.5) circle (2pt);
    \node at (1.5,0) {$q$};
\node at (-1.5,0) {$p$};
\node at (0,-0.3) {$s$};
\node at (0.90,0.25) {$t$};
\node at (-1,0.25) {$u$};
\end{tikzpicture}
\hspace{0.1cm}
\begin{tikzpicture}
    \draw[->,double,thick] (-0.7, 0.75) -- (0.7, 0.75);
    \node at (0,1.1) {\scalebox{1.4}{$p$}};
    \filldraw[black] (0,0) circle (0.01pt);
\end{tikzpicture}
\hspace{0.1cm}
\begin{tikzpicture}
\draw[thick, ->] (-0.1, -0.45) -- (-0.8, 0.45) ; 
\draw[thick, ->] (0.1, -0.45) -- (0.8, 0.45) ; 
\draw[ thick, <-] (-0.75, 0.54) -- (0.75, 0.54) ; 

\coordinate (A) at (0.50,0);
\coordinate (B) at (-0.50,0);

    \draw[thick, rotate = 45] (A) ellipse (1 and 0.5) ;
    \draw[thick, rotate = -45] (B) ellipse (1 and 0.5);
\filldraw[black] (0,-0.5) circle (2pt);
\filldraw[black] (0.90,0.5) circle (2pt);
\filldraw[black] (-0.90,0.5) circle (2pt);
    \node at (1.5,0) {$q$};
\node at (-1.5,0) {$p$};
\node at (0,-0.3) {$s$};
\node at (0.90,0.25) {$t$};
\node at (-1,0.25) {$u$};
\end{tikzpicture}}}
     \caption{On the left: An epistemic space with $S=\{u,s,t\}$, $O=\{p,q\}$. 
     On the right: a plausibility order on the same space with $t\prec u\prec s$ and the revised order $u\prec t\prec s$ after observing $p$ (represented by the $p$-labelled arrow). This  is an example of (one-step) \textit{minimal revision} (Def.~\ref{def:min_update}). An arrow from one world to another indicates that the latter is  more plausible than the former. We omit reflexive arrows.}
     \label{fig:min_not_learnable}
 \end{figure}
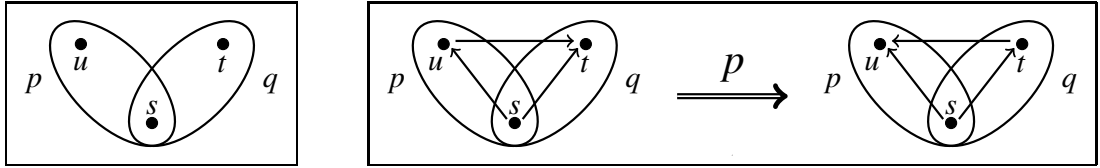
\begin{definition}[\textrm{\cite[Def.~4, 5]{baltag2019truth}}]
    A \textbf{data sequence} $\sigma$ is a finite sequence of elements of $O$; a \textbf{data stream} $\vec{O}$ is an infinite sequence of elements in $O$.
    Given a data sequence $\sigma=(O_0,O_1,\ldots, O_n)$, we denote by $set(\sigma)$ the set of elements of $O$ that occur in $\sigma$; given a data stream $\vec{O}=(O_0,O_1,\ldots)$, we let $set(\vec{O})$ be the set of elements of $O$ that occur in $\vec{O}$. Given $n\in \Nat$, we denote by $O_n$ the $n^{th}$ observation in $\vec{O}$.  $\vec{O}[k]$ is the initial segment of $\vec{O}$ of length $k$. $\sigma*\vec{O}$ is the concatenation of the sequence $\sigma$ with the data stream $\vec{O}$. The empty sequence is denoted by $\epsilon$.
\end{definition}

We consider  streams containing all and only observations that are true in the actual world. 

\begin{definition}[\textrm{\cite[Defs.~6,~7]{baltag2019truth}}] A \textbf{data sequence} $\sigma$ is: \textbf{sound with respect to a state} $s$ iff $set(\sigma)\subseteq O_s$; \textbf{complete with respect to a state} $s$ iff $O_s\subseteq set(\sigma)$. A \textbf{data stream} $\vec{O}$ is: \textbf{sound with respect to a state} $s$ iff $ set(\vec{O})\subseteq O_s$; \textbf{complete with respect to a state} $s$ iff $ O_s\subseteq set(\vec{O})$.  
\end{definition}

At each time-step, a learner uses some belief revision method to revise their plausibility order on the basis of the new observation received. Formally:

\begin{definition}[\textrm{\cite[Def.~$10$]{baltag2019truth}}]
    A \textbf{one-step belief revision} is a function $R_1$ taking a plausibility space $\textbf{B}$ and an observable $p\in O$ to output the plausibility space $R_1(\textbf{B}, p)$.
    A one-step belief revision function $R_1$ induces an \textbf{iterated belief revision function} $R$  in the following way: $R(\textbf{B},\epsilon)=\textbf{B}$, $ R(\textbf{B},\sigma*p)=R_1(R(\textbf{B},\sigma),p)$.

\end{definition}

Any belief-revision method can be used to define a corresponding learning method.

\begin{definition}[\textrm{\cite[Def.~$8$]{baltag2019truth}}]
     A \textbf{learning method} $L$ is a function that takes as input an epistemic space $\textbf{S}=\langle S, O \rangle$ and a data sequence $\sigma$ and outputs a conjecture $L(\textbf{S},\sigma)\subseteq S$. 
\end{definition}

\begin{definition}[\textrm{\cite[Def.~$11$]{baltag2019truth}}]
A \textbf{plausibility assignment} $\textbf{PLAUS}$ is a function that takes as input an epistemic space $\textbf{S}=\langle S, O\rangle$ and assigns to it a plausibility order $\preceq$, converting it into a plausibility space $\textbf{PLAUS}(S)=\langle\textbf{S},\preceq\rangle$.
\end{definition}

\begin{definition}[\textrm{\cite[Def.~$12$]{baltag2019truth}}]
Let a belief-revision method $R$ be given. Given any epistemic space $\textbf{S}$ and plausibility assignment $\textbf{PLAUS}$, a belief-revision method $R$ induces a \textbf{canonical learning method} in the following way: $L^{PLAUS}_R (\textbf{S}, \sigma):= min \textrm{ } R(PLAUS(\textbf{S}),\sigma)$.

\end{definition}

\noindent Given an epistemic space $\textbf{S}$, a plausibility assignment \textbf{PLAUS}, a one-step belief-revision method $R_1$, and $\vec{O}[n]$, we denote with $\preceq^{R_1}_{\vec{O}[n]}$ the plausibility order in the updated epistemic space  $R(\textbf{PLAUS}(S),\vec{O}[n])$.

As the central criterion of successful learning we take \emph{identifiability in the limit}, which originally comes from the studies of inductive grammar inference \cite{GOLD1967447}, and is often combined with computational restrictions on learners and the classes they learn. In this paper, we abstract away from such constraints and focus on purely structural considerations. 

\begin{definition}[\textrm{\cite[Def.~$18$]{baltag2019truth}}, originally in \cite{GOLD1967447}] 

Let an epistemic space $\textbf{S}=\langle S, O\rangle$ be given.
A \textbf{state }$s\in S$ is \textbf{identified in the limit by a learning method}\textbf{ $L$} iff for any sound and complete data stream $\vec{O}$ for $s$, there is an $n\in \mathbb{N}$ such that for all $k\geq n$, $L(\textbf{S},\vec{O}[k])=\{s\}$.
The \textbf{epistemic space} $\textbf{S}$ \textbf{is identified in the limit by learning method $L$} iff all $s\in S$ are identified in the limit by learning method $L$.
The \textbf{epistemic space} $\textbf{S}$ is \textbf{identifiable in the limit} (\textbf{learnable}) iff it is identified in the limit by some learning method.
\label{def:identifiability_limit}
\end{definition}

\noindent Instead of such general, unrestricted learning functions, we will focus on more constructive ones, namely, those guided by some underlying belief revision methods.
\begin{definition}[\textrm{\cite[Def.~$19$]{baltag2019truth}}]
An \textbf{epistemic space} $\textbf{S}$ \textbf{is identifiable in the limit by the belief revision method} $R$ if there exists a prior plausibility assignment $PLAUS$ such that $L^{PLAUS}_R$ identifies $S$ in the limit.
A \textbf{belief revision method} $R$ is \textbf{universal on a class $C$} of epistemic spaces if every space in $C$ is identifiable by the belief revision method. A \textbf{belief revision method} $R$ is \textbf{universal (tout court)} if it is universal in the class of all learnable spaces.
\end{definition}

\section{Learning with Minimal Revision}
\label{sec:min}

What is the least disruptive way of revising plausibility spaces? Such a minimal policy was first discussed in \cite{Spohn1988} and then generalised to total preorders in \cite{Boutilier:1992}. Upon receiving  $p$, minimal revision moves all the most plausible $p$-worlds in front of all other worlds, leaving the rest of the plausibility order unchanged. Minimal revision is \textit{conservative} in the sense that it leaves as much as possible of the old plausibility ordering unchanged \cite{Boutilier:1992,NinaPhD}.\footnote{For a systematic analysis of the learning-theoretic properties of the canonical learning methods induced by minimal revision and other belief revision methods, see \cite{NinaPhD}.} In our setting, it is defined in the following way. 
\begin{definition}[Minimal Revision, \cite{Baltag2016smetss}]
    Let $\textbf{B}=\langle S, O, \preceq \rangle$ be a plausibility space and $p\in O$. The one-step revision function $\textbf{mini}(\textbf{B}, p)$ generates the plausibility space $\textbf{B}'=\langle S, O, \preceq' \rangle$, where $\preceq'$ is such that if $s\in min_{\preceq} p$ and $t\not\in min_{\preceq} p$, then $s\preceq' t$ and $t\not\preceq' s$; $s\preceq' t$ iff $s\preceq t$, otherwise.
    \label{def:min_update}
\end{definition}
 
  Minimal revision suffers from a number of shortcomings. For instance, it can fail to stabilize when revising with true higher-order information \cite{BaltagSmets:2011-Keep-Changing-Your-Beliefs-Aiming-for-the-Truth}, and it fails to learn the least complex problems in the learning setting of \cite{kelly1999iterated}. 
 With respect to the framework adopted in this paper, it has been shown that minimal revision is not universal, in the sense that not all spaces that are learnable can be learned via minimal revision \cite{NinaPhD,BGStruth-tracking,baltag2019truth}.
 Consider the space in Figure~\ref{fig:min_not_learnable} from \cite{baltag2019truth}. First, note that the space is learnable by the cruder \textit{conditioning} method (Definition~\ref{def:conditioning}): At each time-step, eliminate all worlds at which the new information is false. In this case, any initial plausibility order would do, as long as $u\prec s$ and $t\prec s$. Yet, there is no plausibility assignment 
 allowing identification of this space in the limit via minimal revision: In order to identify either $u$ or $t$, both of them must be strictly more plausible than $s$; but then $s$ is not identified, since no observation will ever suffice to make $s$ strictly more plausible than $u$ and $t$.

We are now ready to state our main research question: Even if minimal revision cannot identify all learnable epistemic spaces, are there interesting classes of epistemic spaces that it can identify? In the sections below, we give a positive answer to this question and consider some notable classes of spaces in which minimal revision is universal. All omitted proofs are included in the \href{sec:Appendix}{Appendix}.

\subsection{Minimal Revision is Universal on the Class of Finitely Identifiable Spaces} \label{sec:dfftmin}

The success criterion of identifiability in the limit in Definition~\ref{def:identifiability_limit} is often contrasted with a stronger criterion of conclusive learning (see \cite{gierasimczuk_dejong_2012, degremont2011finite}), adapted from finite identifiability as proposed in \cite{Freivald:1979aa,Klette:1980aa,Jantke:1981aa}. Finite identifiability was characterized in \cite{mukouchi1992characterization}, studied, e.g., in \cite{lange_zeugmann,vargas2020path}, and applied to problems in AI, e.g., in \cite{BG2015,BG2018,Singleton:2024aa}. It requires not only that the learner \emph{converges} to a correct conjecture, but also that they know with certainty that the conjecture is true. The latter condition of certainty can be rendered as the requirement that for any stream the learner is a once-defined function, i.e., throughout the learning process it only has one-shot at the correct guess \cite{gierasimczuk_dejong_2012}. In this section, we will show that \textbf{mini} can identify in the limit the class of epistemic spaces that can be finitely identified, i.e., those spaces for which there exists a learner that can identify the actual world with a single guess. 

 \begin{figure}
     \centering
\scalebox{1}{\begin{tikzpicture}
    \coordinate (A) at (0,0);
\coordinate (B) at (-0.50,0);
\coordinate (C) at (0,0.55);

    \draw[thick, rotate = 30] (-0.27,0.87) ellipse (0.7 and 1.7) ;
    \draw[thick,rotate = -30] (0.27,0.87) ellipse (0.7 and 1.7) ;
    \draw[thick] (0,1.8) ellipse (1.7 and 0.7) ;
    

\filldraw[black] (1.3,1.8) circle (2pt);
\filldraw[black] (-1.3,1.8) circle (2pt);
\filldraw[black] (0,-0.4) circle (2pt);
    \node at (1.8,0.6) {$q$};
    \node at (0,2.8) {$r$};
    \node at (-1.8,0.5) {$p$};
   

\node at (0,-0.2) {$s$};
\node at (-1.2,1.6) {$u$};
\node at (1.2,1.6) {$t$};
\end{tikzpicture}
\hspace{1cm}
\begin{tikzpicture}
    \coordinate (A) at (0,0);
\coordinate (B) at (-0.50,0);
\coordinate (C) at (0,0.55);

    \draw[thick] (A) ellipse (1.7 and 0.7) ;
    \draw[thick] (-0.9,0.9) ellipse (0.7 and 1.7) ;
    \draw[thick] (0.9,0.9) ellipse (0.7 and 1.7) ;
    \draw[thick] (0,1.8) ellipse (1.7 and 0.7) ;
    

\filldraw[black] (0.90,1.8) circle (2pt);
\filldraw[black] (-0.90,1.8) circle (2pt);
\filldraw[black] (0.90,0) circle (2pt);
\filldraw[black] (-0.90,0) circle (2pt);
    \node at (-1,2.8) {$q$};
    \node at (1,2.9) {$\bar{q}$};
    \node at (-2,0) {$p$};
    \node at (-2,1.8) {$\bar{p}$};
   

\node at (0.70,0.3) {$s$};
\node at (-0.70,0.3) {$t$};
\node at (-0.70,2) {$u$};
\node at (0.70,2) {$w$};
\draw[thick] (-0.70, 0.3) circle (0);
\end{tikzpicture}
\hspace{1.15cm}
\begin{tikzpicture}
    \filldraw[black] (-0.70,2) circle (2pt);
    \filldraw[black] (0.70,2) circle (2pt);
    \filldraw[black] (2.1,2) circle (2pt);
    \draw[opacity=0.7] (3.5,2) circle (2pt);
    \draw[thick] (0, 2) ellipse (1.2 and 0.7) ;
    \draw[thick] (1.4, 2) ellipse (1.2 and 0.7) ;
    \draw[thick] (2.8, 2) ellipse (1.2 and 0.7) ;
    \draw[thick, dashed,rotate=180] (-4.2,-2) +(90:1.2 and 0.7) arc (90:-90:1.2 and 0.7);
    \draw[thick] (-0.70, 2) circle (0.4);
    \draw[thick] (-0.70, 0.3) circle (0);
    \node at (-0.70,2.2) {$s_0$};
    \node at (0.70,2.2) {$s_1$};
    \node at (2.1,2.2) {$s_2$};
    \node at (0,3) {$p_1$};
    \node at (-0.25,1.6) {$p_0$};
    \node at (1.4,3) {$p_2$};
    \node at (2.8,3) {$p_3$};
\end{tikzpicture}}

     \caption{Example of three epistemic spaces that are finitely identifiable. On the left: $S=\{s,u,t\}$, $O=\{p,q,r\}$ . At the center: $S=\{s,u,t,w\}$, $O=\{p,\bar{p},q,\bar{q}\}$. On the right: $S=\{s_i|i\in N\}$, $O=\{p_i:i\in \Nat\}$, and for all $s_i$, $O_i=\{p_j\in O: j=i\textrm{ or }j=i+1\}$.}
     \label{fig:dfttspaces}
 \end{figure}
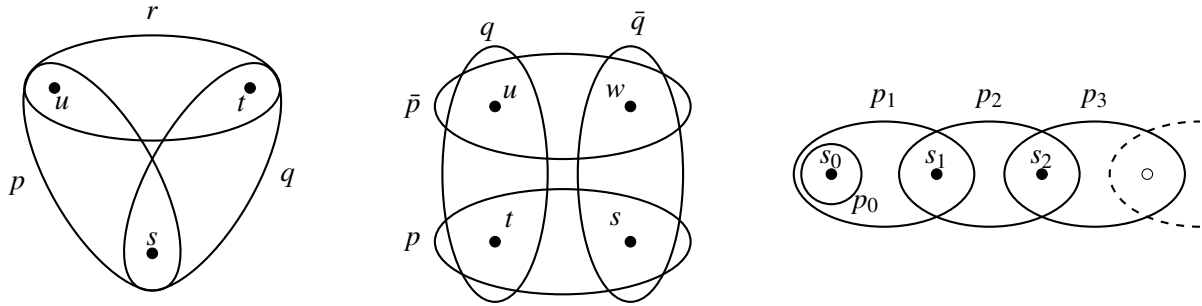

\begin{definition}[\cite{mukouchi1992characterization}]\label{def:fin_id}
Let an epistemic space $\textbf{S}=\langle S, O\rangle$ be given.
A\textbf{ state} $s\in S$ is \textbf{finitely identified} by a learning method $L$ iff for any sound and complete data stream $\vec{O}$ for $s$, when inductively given $\vec{O}$, $L$ outputs at some point a single conjecture ${s}$. 
The \textbf{epistemic space} $\textbf{S}$ is \textbf{finitely identified} by learning method $L$ iff all $s\in S$ are finitely identified by learning method $L$.
The \textbf{epistemic space} $\textbf{S}$ is \textbf{finitely identifiable} (\textbf{conclusively learnable}) just in case it is finitely identified by some learning method.
\end{definition}

In \cite{mukouchi1992characterization}, in a computable setting, it has been shown that this class of spaces is characterized by the existence of the following map:

 \begin{definition}[\textrm{\cite[Def.~$10.8$]{gierasimczuk2014logic}}, originally in \cite{mukouchi1992characterization}] Let $\textbf{S}=\langle S, O\rangle$ be an epistemic space, and let $s\in S$ be given. A \textbf{definite finite tell-tale map} is a total map assigning to each $
s\in S$ a \textbf{definite finite tell-tale set} $D_s\subseteq O$ such that: (i) $D_s$ is finite; (ii) $D_s\subseteq O_s$; (iii) for any $t\in S$, if $D_s\subseteq O_t$, then $s=t$.
\label{def:dfttm}
\end{definition}

\noindent Informally, the existence of a definite finite tell-tale map on a space means that each world is uniquely identified by some finite conjunction of observables. Fig.~\ref{fig:dfttspaces} shows examples of spaces with definite finite tell-tale maps. The characterisation below adapts to our setting the original theorem in \cite{mukouchi1992characterization}. 

\begin{proposition}\label{th:finite_id}
    An epistemic space $\textbf{S}$ is finitely identifiable iff it has a definite finite tell-tale map.
\end{proposition}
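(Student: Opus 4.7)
The plan is to prove the two directions separately, relying on the finiteness of the tell-tale sets and the soundness/completeness conditions on data streams. I would present the easier $(\Leftarrow)$ direction first.

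For $(\Leftarrow)$, assume $\textbf{S}$ admits a definite finite tell-tale map $s\mapsto D_s$ and define a learner $L$ by setting $L(\textbf{S},\sigma)=\{s\}$ whenever there is an $s\in S$ with $D_s\subseteq set(\sigma)$, and $L(\textbf{S},\sigma)=S$ otherwise. I would first argue that along any sound sequence for an actual state $s^*$, at most one such $s$ can exist: if $D_s\subseteq set(\sigma)\subseteq O_{s^*}$, then clause (iii) of Definition~\ref{def:dfttm} forces $s=s^*$. Next, for any sound and complete stream $\vec{O}$ for $s^*$, the finiteness of $D_{s^*}$ together with $D_{s^*}\subseteq O_{s^*}\subseteq set(\vec{O})$ yields some $n$ with $D_{s^*}\subseteq set(\vec{O}[n])$; at that stage $L$ first outputs $\{s^*\}$, and the uniqueness just shown ensures it never outputs any different singleton, so $L$ finitely identifies $s^*$.

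For $(\Rightarrow)$, fix a learning method $L$ that finitely identifies $\textbf{S}$. For each $s\in S$, choose a sound and complete stream $\vec{O}^s$ for $s$, let $n_s$ be the stage at which $L$ first outputs the singleton $\{s\}$ on $\vec{O}^s$, and set $D_s:=set(\vec{O}^s[n_s])$. Conditions (i) and (ii) of Definition~\ref{def:dfttm} are immediate: $D_s$ is finite as the content of a finite sequence of length $n_s$, and $D_s\subseteq O_s$ by the soundness of $\vec{O}^s$ for $s$.

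The main obstacle is condition (iii), which I expect to require the bulk of the argument. Suppose $D_s\subseteq O_t$; I need to conclude $s=t$. Since $D_s\subseteq O_t$ and $D_s$ is finite, I can extend $\vec{O}^s[n_s]$ to a stream $\vec{O}'$ by appending a complete enumeration of $O_t$; then $\vec{O}'$ is sound and complete for $t$, while the initial segment satisfies $\vec{O}'[n_s]=\vec{O}^s[n_s]$. On this prefix $L$ outputs $\{s\}$. Invoking the once-defined character of finite identification recalled just before Definition~\ref{def:fin_id}, $L$ can commit to only one singleton along $\vec{O}'$; since $L$ finitely identifies $t$, that singleton must be $\{t\}$, forcing $s=t$ and completing the proof.
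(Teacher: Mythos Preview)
Your proof is correct and follows essentially the same approach as the paper's: for $(\Rightarrow)$ both you and the paper take the content of a prefix on which $L$ outputs $\{s\}$ as $D_s$ and derive a contradiction by replaying that prefix as the start of a sound and complete stream for a hypothetical $t\neq s$; for $(\Leftarrow)$ both construct a learner that waits for some $D_s$ to be covered. The only cosmetic difference is that the paper's learner uses an enumeration with the bound $i\leq k$ and an explicit ``no previous guess'' clause, whereas your learner checks all states at once---your uniqueness argument on sound streams makes this well-defined where it matters, though you may want to add a tie-breaking rule so that $L$ is literally a total function on arbitrary sequences.
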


 We show now that \textbf{mini} can learn the class of finitely identifiable spaces, i.e., it can identify this class in the limit. We do this by showing that \textbf{mini} can learn any space that has a definite finite tell-tale map. We start with the following Lemma:

\begin{lemma}\label{lemma:mini_persistence}
    Let an epistemic space $\textbf{S}=\langle S, O\rangle$ be given. Let \textbf{PLAUS} be the plausibility assignment that assigns to $\textbf{S}$ the preorder $\preceq$ such that for all $s,t\in S$, $s\simeq t$. Then, for any state $s$ and for any sound data-stream $\vec{O}$ for $s$, $s\in L^{\textbf{PLAUS}}_\textbf{mini}(\textbf{S}, \vec{O}[n])$ for all $n\in \mathbb{N}$.
\end{lemma}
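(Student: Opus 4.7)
The plan is to prove the lemma by induction on $n$, tracking that the true world $s$ remains a $\preceq^{\textbf{mini}}_{\vec{O}[n]}$-minimal element of $S$ at every stage. Since $L^{\textbf{PLAUS}}_{\textbf{mini}}(\textbf{S},\vec{O}[n]) = min_{\preceq^{\textbf{mini}}_{\vec{O}[n]}} S$, this suffices.

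For the base case $n=0$, the initial assignment makes $\preceq$ the total preorder with $s'\simeq t'$ for all $s',t'\in S$, so $min_{\preceq} S = S$, which trivially contains $s$.

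For the inductive step, suppose $s\in min_{\preceq^{\textbf{mini}}_{\vec{O}[n]}} S$ and let $p=O_n$ denote the $(n{+}1)$st observable in $\vec{O}$. By soundness of $\vec{O}$ for $s$, we have $s\in p$. Combined with $s$ being $\preceq^{\textbf{mini}}_{\vec{O}[n]}$-minimal in $S$, it follows that $s\in min_{\preceq^{\textbf{mini}}_{\vec{O}[n]}} p$. Now I invoke Definition~\ref{def:min_update}: minimal revision by $p$ makes every element of $min_{\preceq^{\textbf{mini}}_{\vec{O}[n]}} p$ strictly more plausible than every world outside that set, while leaving the relative order inside unchanged. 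Hence $s$ is strictly $\preceq^{\textbf{mini}}_{\vec{O}[n+1]}$-below every world not in $min_{\preceq^{\textbf{mini}}_{\vec{O}[n]}} p$, and $\simeq$-equivalent to every world inside it, so $s\in min_{\preceq^{\textbf{mini}}_{\vec{O}[n+1]}} S$, closing the induction.

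There is no real obstacle here beyond carefully unpacking the definition of \textbf{mini}; the argument works precisely because the flat prior guarantees that the inductive hypothesis ``$s$ is minimal in all of $S$'' is strong enough to yield ``$s\in min_\preceq p$'' for \emph{any} $p$ containing $s$. If one started from a non-flat prior, one would additionally need to ensure that $s$ is not dominated by some world in $p$ that was ranked below $s$ in a previous step — this is exactly the failure mode illustrated by the space in Figure~\ref{fig:min_not_learnable}, and it is what the flat prior sidesteps.
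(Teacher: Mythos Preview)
Your proof is correct and follows essentially the same approach as the paper's: induction on $n$, using the flat prior for the base case, and for the inductive step combining soundness ($s\in p$) with the inductive hypothesis ($s$ is $\preceq^{\textbf{mini}}_{\vec{O}[n]}$-minimal in $S$) to get $s\in min_{\preceq^{\textbf{mini}}_{\vec{O}[n]}} p$, then invoking Definition~\ref{def:min_update}. The only difference is presentational---you spell out slightly more of how Definition~\ref{def:min_update} yields minimality of $s$ in the revised order, and you add the closing remark about why the flat prior matters, but the argument is the same.
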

\begin{proof}
 We prove by induction on $n$ that $s\in min_{\preceq^{\textbf{mini}}_{\vec{O}[n]}}$.
    [Base Case: $n=0$] The sequence $\vec{O}[0]$ is empty, and as such the plausibility model we consider is the initial plausibility model. Since for all $t\in S$, $s\preceq t$, we have that $s$ is a minimal element of $\preceq$.
    [Inductive Step] Assume as inductive hypothesis that for all $m\leq n$, it is the case that $s\in min_{\preceq^{\textbf{mini}}_{\vec{O}[m]}}$. Consider now $n+1$, and consider the observable $p$ occurring in the position $n+1$ in the stream $\vec{
    O}$, i.e., the observable after the sequence $\vec{O}[n]$. By assumption, $\vec{O}$ is sound with respect to $s$, and thus $s\in p$. In addition, by inductive hypothesis we have that $s\in min_{\preceq^{\textbf{mini}}_{\vec{O}[n]}}p$. Hence, $s\in min_{\preceq^{\textbf{mini}}_{\vec{O}[n + 1]}}$ by Def.~\ref{def:min_update}. Therefore, in any sound and complete data-stream $\vec{O}$ for all $n\in \mathbb{N}$, $s\in min_{\preceq^{\textbf{mini}}_{\vec{O}[n]}}$, and thus $s\in L^\textbf{PLAUS}_{\textbf{mini}}(\textbf{S}, \vec{O}[n])$.
\end{proof}

\begin{theorem}\label{prop:at_least_DFTT_mini}
Minimal revision is universal on the class of spaces that are finitely identifiable. 
\end{theorem}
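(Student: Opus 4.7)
The plan is to prove the statement by exhibiting a single plausibility assignment that works uniformly for every finitely identifiable space, namely the flat (totally tied) prior from Lemma~\ref{lemma:mini_persistence}. By Proposition~\ref{th:finite_id}, finite identifiability of $\textbf{S}$ is equivalent to the existence of a definite finite tell-tale map $s \mapsto D_s$, and I will combine this with the persistence guarantee of Lemma~\ref{lemma:mini_persistence} to argue that, on any sound and complete stream for $s$, after finitely many steps the actual state $s$ becomes and remains the unique $\preceq$-minimal element.

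First, I would fix $\textbf{PLAUS}$ to be the flat assignment that sets $s \simeq t$ for all $s,t \in S$. By Lemma~\ref{lemma:mini_persistence}, for every sound stream $\vec{O}$ for $s$ and every $n \in \mathbb{N}$, $s \in L^{\textbf{PLAUS}}_{\textbf{mini}}(\textbf{S}, \vec{O}[n])$. Thus it suffices to show that, for any $t \neq s$, there is some stage past which $t$ is strictly less plausible than $s$ (and stays so).

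Next, for each competitor $t \neq s$, the tell-tale property supplies some $p_t \in D_s$ with $p_t \notin O_t$ (otherwise $D_s \subseteq O_t$ would force $t = s$). Because $D_s$ is finite, $D_s \subseteq O_s$, and $\vec{O}$ is complete for $s$, there exists a finite stage $N$ by which every element of $D_s$ has appeared in $\vec{O}[N]$. At the step where $p_t$ is first observed, Lemma~\ref{lemma:mini_persistence} guarantees $s$ is $\preceq$-minimal in the current order, and soundness gives $s \in p_t$, so $s \in min_\preceq p_t$; since $t \notin p_t$, clearly $t \notin min_\preceq p_t$. Applying Def.~\ref{def:min_update} then yields $s \prec t$ in the revised order.

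The main obstacle is showing that once $s \prec t$ holds, it is preserved by every subsequent revision along a sound stream. For any later observable $q$, soundness forces $s \in q$, and Lemma~\ref{lemma:mini_persistence} gives $s \in min_\preceq q$; hence the only clause of Def.~\ref{def:min_update} that could promote $t$ above $s$ would require $t \in min_\preceq q$ together with $s \notin min_\preceq q$, which is impossible. The remaining clauses of the definition either leave the pair $(s,t)$ untouched or push $t$ still further behind $s$, so the strict inequality $s \prec t$ persists. Combining this persistence with the finite-$N$ argument above, for all $k \geq N$ and all $t \neq s$ we get $t \notin L^{\textbf{PLAUS}}_{\textbf{mini}}(\textbf{S}, \vec{O}[k])$, whereas Lemma~\ref{lemma:mini_persistence} ensures $s$ still belongs to that set; so $L^{\textbf{PLAUS}}_{\textbf{mini}}(\textbf{S}, \vec{O}[k]) = \{s\}$ for all $k \geq N$, completing the identification.
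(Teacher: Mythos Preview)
Your proposal is correct and follows essentially the same approach as the paper: both use the flat prior together with Lemma~\ref{lemma:mini_persistence} and the definite finite tell-tale furnished by Proposition~\ref{th:finite_id}. The only difference is cosmetic---the paper argues by contradiction (any $t$ still minimal after $D_s$ has been observed must satisfy all of $D_s$, forcing $t=s$), whereas you argue directly and spell out the persistence of $s \prec t$ that the paper leaves implicit in the parenthetical ``otherwise, it would have become strictly less plausible than $s$''.
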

\begin{proof}
    Let $\textbf{S}$ be an epistemic space that is finitely identifiable. Then, by Prop.~\ref{th:finite_id}, $\textbf{S}$ has a definite finite tell-tale map. Let the plausibility order $\preceq$ be given such that for all $t,s\in S$, $s\simeq t$. Assume towards a contradiction that the learner induced by $\textbf{mini}$ and $\preceq$ does not identify the space $S$, i.e., there is a state $s\in S$ and a sound and complete data-stream $\vec{O}$ for $s$ such that there is no $n\in\mathbb{N}$ such that for all $k\geq n$, $L^{\textbf{PLAUS}}_\textbf{mini}(\textbf{S}, \vec{O}[k])=\{s\}$.
    Consider such a data-stream. Since $\vec{O}$ is sound and complete with respect to $s$, and since there exists a definite finite tell-tale set $D_s$ for $s$, there is some $n\in \mathbb{N}$ such that $D_s\subseteq set(\vec{O}[n])$. Consider such a sequence $\vec{O}[n]$. Since $s$ is not identifiable in the limit, there exists a $k\geq n$, such that $L^{\textbf{PLAUS}}_\textbf{mini}(\textbf{S}, \vec{O}[k])\not=\{s\}$. By Lemma~\ref{lemma:mini_persistence}, it must be the case that $s\in L^{\textbf{PLAUS}}_\textbf{mini}(\textbf{S}, \vec{O}[k])$. Therefore, there must exist a $t\not= s$ such that $t\in L^{\textbf{PLAUS}}_\textbf{mini}(\textbf{S}, \vec{O}[k])$. It follows that $t\in \bigcap set(\vec{O}[k])$ (otherwise, it would have become strictly less plausible than $s$ for some $k'\leq k$).  Since $D_s\subseteq set(\vec{O}[k])$, we have $t\in\bigcap D_s$. By Def.~\ref{def:dfttm}, we have $t=s$. Contradiction.
\end{proof}

The theorem above states that any space that is finitely identifiable can be learned via \textbf{mini} by using an initially `unbiased' plausibility order, which considers all possibilities equally plausible. This plausibility assignment together with \textbf{mini} induces a method that patiently waits for enough data to dismiss all but one possibility. 
Note, however, that starting from a prior that ranks all states as equally plausible is not necessary for learning finitely identifiable spaces via \textbf{mini}, i.e., there are finitely learnable spaces that can be learned via \textbf{mini} and a plausibility assignment where for at least two states $s,t$, $s\prec t$. For instance, the space on the right in Figure~\ref{fig:dfttspaces} can be identified by ranking the states in a way such that $s_i\preceq s_j$ iff $i\leq j$.

Let us also note that \textbf{mini} can identify spaces that are not finitely identifiable and thus the class of finitely identifiable epistemic spaces is properly included in the class of spaces on which \textbf{mini} is universal. For instance, the epistemic space with $S=\{s,t\}$, $O=\{p,q\}$, $O_s=\{p,q\}$ and $O_t=\{q\}$ is a space identifiable via \textbf{mini} using the plausibility order $t\prec s$. This space is not finitely identifiable since $O_t\subseteq O_s$ and therefore $t\in S$ has no definite finite tell-tale set.

\subsection{Minimal Revision is Universal on Positive and Negative Data on Finite Spaces} 
\label{sec:pnmin}

In the inductive inference literature, it is common to distinguish between learning from only positive information and from  positive and negative information. This distinction is given a lot of attention, because the latter setting is much more informative, and so it leads to different learning powers of the same learning methods (see, e.g., \cite{Angluin:1983aa}). In our case, learning from positive and negative information means that for every observable its complement can also be observed. In logic parlance, this is expressed as closure under negation (i.e., if $p$ can be observed, so can $\neg p$). We will therefore speak of epistemic spaces that are closed under negation \cite{baltag2019truth}. Negation-closed spaces are a subclass of the class of \textit{strongly separated} spaces \cite{baltag2016solvability}. 

\begin{definition}
    An epistemic space $\textbf{S}=\langle S,O\rangle$ is: \textbf{strongly separated} when for every $s,t\in S$, $O_s\not\subseteq O_t$; \textbf{negation-closed} iff for every $p\in O$, there is $\bar{p}\in O$ such that $p=S\setminus \bar{p}$.
\end{definition}

 Figure~\ref{fig:dfttspaces} shows an example of a strongly separated space that is not negation-closed (left) and an example of a negation-closed space (center).
From \cite{NinaPhD}, we know that \textbf{mini} is not universal on positive and negative data on the class of all negation-closed spaces, and thus the same holds for the larger class of strongly separated spaces. Nevertheless, we can show that 
 \textbf{mini} is in fact universal on the class of \textit{finite} strongly separated spaces and thus on the class of \textit{finite} negation-closed spaces. 
\begin{lemma}
    Let $\textbf{S}$ be a finite strongly separated epistemic space. Then $\textbf{S}$ has a definite finite tell-tale map.\label{lemma:negation_closedtodfft}
\end{lemma}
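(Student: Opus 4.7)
The plan is to exploit strong separation to exhibit, for each world $s$, a witness observable distinguishing $s$ from every other world, and then to collect finitely many such witnesses into the desired tell-tale set. Since $S$ is assumed finite, this collection will automatically be finite, which is where the finiteness hypothesis does its work.

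Concretely, I would proceed as follows. Fix $s \in S$. For every $t \in S$ with $t \neq s$, strong separation yields $O_s \not\subseteq O_t$, so I can pick some $p_{s,t} \in O_s \setminus O_t$. Define
\[
D_s := \{\, p_{s,t} \mid t \in S,\ t \neq s \,\}.
\]
Because $S$ is finite, $D_s$ is a finite set; by construction $D_s \subseteq O_s$. To verify the tell-tale condition, suppose $t \in S$ satisfies $D_s \subseteq O_t$. If $t \neq s$, then $p_{s,t} \in D_s \subseteq O_t$, contradicting our choice $p_{s,t} \notin O_t$. Hence $t = s$, so $D_s$ satisfies all three clauses of Definition~\ref{def:dfttm}, and the assignment $s \mapsto D_s$ is a definite finite tell-tale map.

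The main obstacle, such as it is, is purely conceptual rather than technical: one has to be careful that strong separation is used with the roles of $s$ and $t$ in the right order (we need a witness in $O_s \setminus O_t$, not the reverse), and that finiteness of $S$ — not of $O$ — is what guarantees $D_s$ is finite. Note that if $S$ were infinite, this argument would break down, since $D_s$ could become infinite even in a strongly separated space; this is consistent with the earlier remark that \textbf{mini} is not universal on all strongly separated (or even all negation-closed) spaces.
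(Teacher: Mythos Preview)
Your proof is correct and follows essentially the same approach as the paper: use strong separation to find, for each $t\neq s$, a witness in $O_s\setminus O_t$, and collect these into $D_s$. The only difference is that you pick a single witness $p_{s,t}$ per pair while the paper takes the full set $\bigcup_{t\neq s}(O_s\setminus O_t)$; your version is in fact slightly cleaner, since it makes the finiteness of $D_s$ follow transparently from the finiteness of $S$ alone.
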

\begin{proof}
    Consider a strongly separated epistemic space $\textbf{S}$. Take an arbitrary$s\in S$. Consider the set $D^s:=\bigcup_{s'\in S\setminus\{s\}}(O_s\setminus{O_{s'}})$.
    Since the space is strongly separated, for all $s'\in S$ such that $s'\not=s$, $O_s\setminus O_{s'}\not=\emptyset$. Thus for all $s'\not= s$, $s'\not\in \bigcap D^s$. Since the space is finite, $D^s$ is finite, and $D^s\subseteq O_s$ by construction. By Def.~\ref{def:dfttm}, $D^s$ is a definite finite tell-tale set for $s$. Since $s$ was arbitrary, $S$ has a definite finite tell-tale map.
\end{proof}

\begin{theorem}
Minimal revision is universal on the class of finite strongly separated epistemic spaces.
    \label{th:separated_mini}
\end{theorem}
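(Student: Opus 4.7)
The plan is to chain together the three results we have just established. By Lemma~\ref{lemma:negation_closedtodfft}, any finite strongly separated epistemic space admits a definite finite tell-tale map. By Proposition~\ref{th:finite_id}, the existence of such a map is equivalent to the space being finitely identifiable. And by Theorem~\ref{prop:at_least_DFTT_mini}, \textbf{mini} is universal on the class of finitely identifiable spaces. Putting these together, every finite strongly separated space is identifiable via minimal revision, which is precisely the statement to be proved.

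Concretely, I would first let $\textbf{S}=\langle S, O\rangle$ be an arbitrary finite strongly separated epistemic space, and invoke Lemma~\ref{lemma:negation_closedtodfft} to obtain a definite finite tell-tale map on $\textbf{S}$. Then I would apply Proposition~\ref{th:finite_id} to conclude that $\textbf{S}$ is finitely identifiable. Finally, by Theorem~\ref{prop:at_least_DFTT_mini}, $\textbf{S}$ is identifiable in the limit by \textbf{mini} (in fact, using the ``unbiased'' uniform prior $\preceq$ with $s\simeq t$ for all $s,t\in S$, as in the proof of that theorem). Since $\textbf{S}$ was arbitrary, \textbf{mini} is universal on the class of finite strongly separated spaces.

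There is essentially no obstacle here: the theorem follows immediately from the preceding material. The only thing worth emphasising in the write-up is that the same canonical plausibility assignment (uniform indifference) used in the proof of Theorem~\ref{prop:at_least_DFTT_mini} works here, so the result is constructive in the sense that we can exhibit a concrete prior plausibility order that enables \textbf{mini} to succeed on any finite strongly separated space. This also immediately yields, as a corollary, that \textbf{mini} is universal on the subclass of \emph{finite} negation-closed spaces, strengthening the observation in \cite{NinaPhD} that finiteness is precisely what is missing for universality on negation-closed spaces in general.
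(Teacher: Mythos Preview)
Your proposal is correct and follows exactly the same chain of implications as the paper's own proof: Lemma~\ref{lemma:negation_closedtodfft} gives a definite finite tell-tale map, Proposition~\ref{th:finite_id} turns this into finite identifiability, and Theorem~\ref{prop:at_least_DFTT_mini} then yields universality of \textbf{mini}. The additional remarks about the uniform prior and the corollary for negation-closed spaces are also in line with the paper's presentation.
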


\begin{proof}
    By Lemma~\ref{lemma:negation_closedtodfft} and Prop.~\ref{th:finite_id}, the class of finite strongly separated epistemic spaces is a subset of the class of finitely identifiable spaces. By Th.~\ref{prop:at_least_DFTT_mini}, \textbf{mini} is universal on the class of finitely identifiable spaces, and thus on the class of finite strongly separated spaces.
\end{proof}

\begin{corollary}
Minimal revision is universal on positive and negative data on the class of finite negation-closed epistemic spaces.
    \label{th:negation_closed}
\end{corollary}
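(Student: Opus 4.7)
The plan is to derive the corollary directly from Theorem~\ref{th:separated_mini} by showing that every finite negation-closed epistemic space is a finite strongly separated epistemic space. Once the inclusion of classes is established, universality of \textbf{mini} on the smaller class follows immediately, because any plausibility assignment that makes \textbf{mini} identify a space in the strongly separated class also identifies it when viewed within the negation-closed subclass.

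The key step is the subclass inclusion. I would take an arbitrary negation-closed space $\textbf{S} = \langle S, O \rangle$ together with two distinct worlds $s, t \in S$ and show $O_s \not\subseteq O_t$. By Def.~\ref{def:epistemic_spaces}, distinct worlds cannot satisfy exactly the same observables, so there exists some $p \in O$ witnessing $O_s \neq O_t$. I then split into two cases. If $p \in O_s \setminus O_t$, the conclusion $O_s \not\subseteq O_t$ is immediate. If instead $p \in O_t \setminus O_s$, I invoke negation-closure to obtain $\bar{p} \in O$ with $\bar{p} = S \setminus p$; then $s \notin p$ gives $\bar{p} \in O_s$ and $t \in p$ gives $\bar{p} \notin O_t$, so $\bar{p}$ witnesses $O_s \not\subseteq O_t$. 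Since $s, t$ were arbitrary, $\textbf{S}$ is strongly separated.

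With the inclusion in hand, finiteness is preserved trivially, so any finite negation-closed space is a finite strongly separated space. Theorem~\ref{th:separated_mini} then gives a plausibility assignment \textbf{PLAUS} such that $L^{\textbf{PLAUS}}_{\textbf{mini}}$ identifies $\textbf{S}$ in the limit on sound and complete data streams, which for a negation-closed space is precisely learning from positive and negative data. This establishes the corollary.

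The step I expect to require most care is the second case of the inclusion argument, where negation-closure must be used to flip an observable that distinguishes $t$ from $s$ into one distinguishing $s$ from $t$; this asymmetry is exactly what fails for strongly separated spaces that are not negation-closed. Everything else is an unpacking of definitions and an appeal to Theorem~\ref{th:separated_mini}, so no new learning-theoretic argument is needed.
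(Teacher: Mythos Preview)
Your proposal is correct and follows essentially the same route as the paper: the corollary is obtained from Theorem~\ref{th:separated_mini} via the inclusion of finite negation-closed spaces in finite strongly separated spaces. The paper simply states this inclusion (citing \cite{baltag2016solvability}) rather than proving it, whereas you spell out the straightforward case analysis using negation-closure; your argument for the inclusion is sound.
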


Therefore, one can safely learn from positive and negative data using \textbf{mini} when only finitely many states are possible. Let us note here that this result has been foreshadowed by Boutilier in his seminal work \cite{Boutilier:1992}, where minimal revision is applied to finitely-grounded knowledge sets over propositional logic (which is closed on negation).

\subsection{What Priors for Minimal Revision?} 
\label{sec:min_plaus}

We have considered above some classes of spaces where \textbf{mini} is universal. However, in those and all other classes on which \textbf{mini} is universal, only some plausibility orders will allow learning via \textbf{mini}. We call these orders \emph{appropriate}: 

 \begin{definition}
    Let $\textbf{S}$ be an epistemic space. A \textbf{plausibility order} $\preceq$ is \textbf{appropriate for learning $\textbf{S}$ via the belief revision method $R$} if the canonical learning method induced by $R$ and the plausibility assignment assigning $\preceq$ to $S$ identifies $\textbf{S}$ in the limit.  
    \label{def:appropriateness}
\end{definition}

Figure~\ref{fig:bad_prior_mini} shows a plausibility order that is not appropriate for learning a space via \textbf{mini}: The state $s$ (in red) is not identified with the (sound and complete) data-stream in which  $q$ occurs once first, followed by an infinite stream of $p$s. With this stream, the state $u$ will forever remain equi-plausible to $s$, and thus $s$ will never be the uniquely minimal element of the space. 

Below, we give a characterisation of the class of appropriate plausibility orders for learning via \textbf{mini} on \textit{finite} epistemic spaces. Our focus on finite spaces is justified by the fact that they are often sufficient for a variety of applications in belief revision theory, knowledge representation, and dynamic epistemic logic (where possible worlds are identified with valuations over a finite set of propositions, see, e.g., \cite{DEL}). Furthermore, the full characterisation of the class of spaces for which \textbf{mini} is universal is still unknown, so we cannot resort to knowledge about the underlying structure of the epistemic space in our analysis.

 Let us remark that our focus here is different from that in \cite{baltag2019truth}, where the focus is on whether a belief revision method is universal in the sense that for any learnable space one can construct \textit{some} plausibility order that guarantees successful learning via that method. The interest in \cite{baltag2019truth} therefore lies in the existence of a suitable plausibility order on every learnable space. In contrast, we aim at characterising for \textbf{mini} (and later in Section~\ref{sec:stronger} also for conditioning and lexicographic upgrade) the class of \textit{all} prior plausibility orders that are suited for learning every space that is learnable via the method under consideration. Of course, an appropriate order for a method can only exist on spaces that can be learned via that method, and thus \cite{baltag2019truth} provides the limits of applicability of our analysis.
 Let us start with some preliminary lemmas.

 \begin{lemma}
    If a plausibility order $\preceq$ is appropriate to learn an epistemic space $\textbf{S}$ via \textbf{mini}, then for all $s\in S$, there exists a $p\in O_s$ such that $s\in min_{\preceq} p$.
    \label{lemma:atleastonemin}
\end{lemma}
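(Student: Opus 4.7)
My plan is to argue by contraposition: assume there is a state $s\in S$ such that $s\notin min_{\preceq} p$ for every $p\in O_s$, and show that $\preceq$ cannot be appropriate. Writing $\preceq_n$ as shorthand for $\preceq^{\textbf{mini}}_{\vec{O}[n]}$, I will in fact establish the stronger statement that, for every sound data stream $\vec{O}$ for $s$ and every $n\in\mathbb{N}$, $s\notin min_{\preceq_n}$. Since $L^{\textbf{PLAUS}}_{\textbf{mini}}(\textbf{S},\vec{O}[n])$ is exactly this minimum, it will follow that $s$ is never identified, contradicting appropriateness.

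The argument is a minimal-counterexample argument that rests on the characteristic property of \textbf{mini}, namely that the new minimum after observing $p_{n+1}$ is exactly $min_{\preceq_n} p_{n+1}$. Suppose towards contradiction that $n$ is the least step at which $s$ enters the minimum. The case $n=0$ is ruled out immediately, because picking any $p\in O_s$ the hypothesis yields some $t\in p$ with $t\prec s$, so $s\notin min_{\preceq}$. For $n\geq 1$ we have $s\in min_{\preceq_{n-1}} p_n$, so $s\preceq_{n-1} u$ for every $u\in p_n$. Since $p_n\in O_s$, applying the hypothesis to $p_n$ produces a witness $w\in p_n$ with $w\prec s$ in the initial order; a contradiction will follow once we show that this strict preference is maintained up to step $n-1$.

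This preservation of $w\prec s$ is the technical heart of the proof, and I plan to establish it by a second (inner) induction on $k\leq n-1$. The base case $k=0$ is immediate by the choice of $w$. For the inductive step, the minimality of $n$ guarantees that $s\notin min_{\preceq_k} p_{k+1}$ at every intermediate step, so by Definition~\ref{def:min_update} the state $s$ is not promoted to the bottom when passing from $\preceq_k$ to $\preceq_{k+1}$. There are then two sub-cases for $w$: if $w\in min_{\preceq_k} p_{k+1}$, then $w$ jumps strictly below every world outside that min-set and in particular strictly below $s$; if $w\notin min_{\preceq_k} p_{k+1}$, then both $w$ and $s$ lie outside the set of promoted worlds, so their relative order is preserved verbatim. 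Either way $w\prec_{k+1} s$. Combining the resulting $w\prec_{n-1} s$ with the earlier $s\preceq_{n-1} w$ delivers the contradiction.

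The main obstacle is setting up the nested-induction argument cleanly and, in particular, ensuring that the sub-case in which $w$ itself is promoted does not inadvertently drag $s$ along with it; this is precisely what the minimality of $n$ prevents. Once the preservation of the witness is secured, the conclusion that $s$ is never identified on any sound and complete stream for $s$ falls out directly, so $\preceq$ cannot be appropriate for \textbf{mini}, completing the contrapositive.
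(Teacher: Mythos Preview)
Your argument is correct. Both you and the paper argue by contraposition and show that $s$ never becomes a $\preceq$-minimal element along any sound stream, relying on the same key fact about \textbf{mini}: worlds outside $min_{\preceq_k} p_{k+1}$ are not promoted, and relative plausibilities among non-promoted worlds are preserved.

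The packaging differs. The paper runs a single forward induction on the stronger invariant ``for every $p\in O_s$ there is some $t\in p$ with $t\prec^{\textbf{mini}}_{\vec{O}[n]} s$''; applied to the next observable $p_{n+1}$, this invariant immediately yields $s\notin min_{\preceq_n} p_{n+1}$ and hence sustains itself. You instead take a minimal counterexample $n$, extract from the hypothesis on the \emph{initial} order a single witness $w\in p_n$, and track just that $w$ by an inner induction up to step $n-1$, using the minimality of $n$ to ensure $s$ is never promoted along the way. Your route is slightly longer (a minimal-counterexample setup plus an inner induction with a case split on whether $w$ is promoted) but very explicit; the paper's universally quantified invariant is slicker because it handles all observables simultaneously and avoids the nested structure.
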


\begin{proof}
    Assume that $\preceq$ is appropriate to learn $\textbf{S}$ via \textbf{mini}. Assume towards a contradiction that there exists an $s$ such that for all $p\in O_s$, $s\not\in min_{\preceq} p$. 
    Consider now a sound and complete data stream such that it enumerates all the elements in $O_s$. We show by induction on the length $n$ of any finite segment of $\vec{O}[n]$ that for all $p\in O_s$ there exists a $t$ such that $t\prec^{\textbf{mini}}_{\vec{O}[n]} s$ and $t\in p$. 
   [Base case $n=0$] Follows directly from the assumption. [Inductive step: $n= m+1$] Assume that for $O[m]$, for all $p\in O_s$ there is a $t$ such that $t\in p$ and $t\prec^{\textbf{mini}}_{O[m]} s$. Consider now $p\in O_s$ that occurs in $\vec{O}$ after $O[m]$. By inductive hypothesis, it follows that there is a $t\in S$ such that $t\prec^{\textbf{mini}}_{O[m]}s$ and $t\in p$. Hence $t\prec^{\textbf{mini}}_{O[m+1]}s$. Hence $s\not\in min_{\preceq^{\textbf{mini}}_{\vec{O}[m+1]}}$.
\end{proof}

\noindent The above lemma expresses that an appropriate plausibility order guarantees that in every state it is possible to make an observation that `favors' that state.

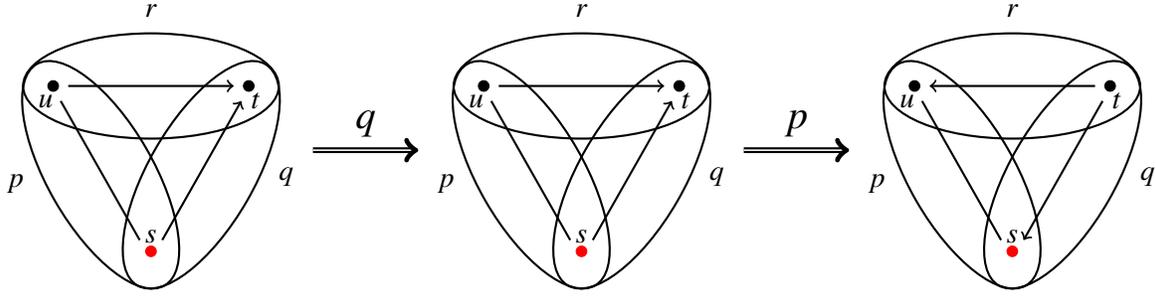
\begin{figure}
     \centering
     \hspace{-0.2cm}
\scalebox{1}{\begin{tikzpicture}
    \coordinate (A) at (0,0);
\coordinate (B) at (-0.50,0);
\coordinate (C) at (0,0.55);

    \draw[thick, rotate = 30] (-0.27,0.87) ellipse (0.7 and 1.7) ;
    \draw[thick,rotate = -30] (0.27,0.87) ellipse (0.7 and 1.7) ;
    \draw[thick] (0,1.8) ellipse (1.7 and 0.7) ;

\filldraw[black] (1.3,1.8) circle (2pt);
\filldraw[black] (-1.3,1.8) circle (2pt);
\filldraw[red] (0,-0.4) circle (2pt);
\draw[->, thick] (-1.1, 1.8) -- (1.1,1.8);
\draw[->, thick] (0.15,-0.25) -- (1.2,1.6);
\draw[-, thick] (-0.15,-0.25) -- (-1.2,1.6);

    \node at (1.8,0.6) {$q$};
    \node at (0,2.8) {$r$};
    \node at (-1.8,0.5) {$p$};

\node at (0,-0.2) {$s$};
\node at (-1.4,1.6) {$u$};
\node at (1.4,1.6) {$t$};
\end{tikzpicture}
\hspace{-0.1cm}
\begin{tikzpicture}
    \draw[->,double,thick] (-0.7, 2) -- (0.7, 2);
    \node at (0,2.35) {\scalebox{1.4}{$q$}};
    \filldraw[black] (0,0) circle (0.01pt);
\end{tikzpicture}
\hspace{-0.1cm}
\begin{tikzpicture}
    \coordinate (A) at (0,0);
\coordinate (B) at (-0.50,0);
\coordinate (C) at (0,0.55);

    \draw[thick, rotate = 30] (-0.27,0.87) ellipse (0.7 and 1.7) ;
    \draw[thick,rotate = -30] (0.27,0.87) ellipse (0.7 and 1.7) ;
    \draw[thick] (0,1.8) ellipse (1.7 and 0.7) ;

\filldraw[black] (1.3,1.8) circle (2pt);
\filldraw[black] (-1.3,1.8) circle (2pt);
\filldraw[red] (0,-0.4) circle (2pt);
\draw[->, thick] (-1.1, 1.8) -- (1.1,1.8);
\draw[->, thick] (0.15,-0.25) -- (1.2,1.6);
\draw[-, thick] (-0.15,-0.25) -- (-1.2,1.6);

    \node at (1.8,0.6) {$q$};
    \node at (0,2.8) {$r$};
    \node at (-1.8,0.5) {$p$};

\node at (0,-0.2) {$s$};
\node at (-1.4,1.6) {$u$};
\node at (1.4,1.6) {$t$};
\end{tikzpicture}
\hspace{-0.1cm}
\begin{tikzpicture}
    \draw[->,double,thick] (-0.7, 2) -- (0.7, 2);
    \node at (0,2.35) {\scalebox{1.4}{$p$}};
    \filldraw[black] (0,0) circle (0.01pt);
\end{tikzpicture}
\hspace{-0.1cm}
\begin{tikzpicture}
    \coordinate (A) at (0,0);
\coordinate (B) at (-0.50,0);
\coordinate (C) at (0,0.55);

    \draw[thick, rotate = 30] (-0.27,0.87) ellipse (0.7 and 1.7) ;
    \draw[thick,rotate = -30] (0.27,0.87) ellipse (0.7 and 1.7) ;
    \draw[thick] (0,1.8) ellipse (1.7 and 0.7) ;

\filldraw[black] (1.3,1.8) circle (2pt);
\filldraw[black] (-1.3,1.8) circle (2pt);
\filldraw[red] (0,-0.4) circle (2pt);
\draw[<-, thick] (-1.1, 1.8) -- (1.1,1.8);
\draw[<-, thick] (0.15,-0.25) -- (1.2,1.6);
\draw[-, thick] (-0.15,-0.25) -- (-1.2,1.6);

    \node at (1.8,0.6) {$q$};
    \node at (0,2.8) {$r$};
    \node at (-1.8,0.5) {$p$};
\node at (0,-0.2) {$s$};
\node at (-1.4,1.6) {$u$};
\node at (1.4,1.6) {$t$};
\end{tikzpicture}}
     \caption{Example of a plausibility order that is not appropriate to learn a space via \textbf{mini}. The epistemic space is given by $S=\{s,u,t\}$ and $O=\{p,q,r\}$; the prior plausibility order (leftmost space) is $t\prec s$, $t\prec u$, $u\simeq s$. The  labelled arrows represent two revision steps, first when observing $q$, and then $p$. 
     }
     \label{fig:bad_prior_mini}
 \end{figure}

\begin{lemma}
    Suppose that a plausibility order $\preceq$ is appropriate to learn an epistemic space $\textbf{S}$ via \textbf{mini}. For all $s\in S$, for all $t\in S$ such that $s\simeq t$ and $t\not=s$, if $|\{p\in O_s: t\not\in p\}|<|\mathbb{N}|$, then there exists a $p\in O_s$ such that $s\in min_{\preceq} p$ and $t\not\in p$.
\label{lemma:infinity_necessary_mini}
\end{lemma}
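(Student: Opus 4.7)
I would prove this by contradiction. Assume $\preceq$ is appropriate to learn $\textbf{S}$ via $\textbf{mini}$, fix $s,t$ with $s\simeq t$, $t\neq s$, and $N_t:=\{p\in O_s:t\notin p\}$ finite, and suppose toward contradiction that $s\notin min_{\preceq}p$ for every $p\in N_t$. The plan is to build a sound and complete data-stream $\vec{O}$ for $s$ such that the canonical learner $L^{\textbf{PLAUS}}_{\textbf{mini}}$ never outputs the singleton $\{s\}$, contradicting appropriateness.

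The stream has two phases. Phase 1 lists all of $N_t$ (finite by hypothesis); Phase 2 then enumerates, cycling if necessary to remain infinite, the set $O_s\cap O_t$. Since $N_t$ and $O_s\cap O_t$ partition $O_s$, this $\vec{O}$ is sound and complete for $s$.

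The technical heart is to establish, by induction on the length of a Phase 1 prefix $\sigma$, the invariant: $s\notin min_{\preceq^{\textbf{mini}}_{\sigma}}p'$ for every $p'\in N_t$. The base case is the contradiction hypothesis. For the inductive step, observing $p\in N_t$ produces the new front layer $M:=min_{\preceq^{\textbf{mini}}_{\sigma}}p$, which by IH excludes $s$, while all non-$M$ worlds retain their $\preceq^{\textbf{mini}}_{\sigma}$-order. A case split on whether $M\cap p'=\emptyset$ then gives $s\notin min_{\preceq^{\textbf{mini}}_{\sigma*p}}p'$ for every $p'\in N_t$ (in the intersecting case the new min-set is $p'\cap M$, which excludes $s$; otherwise the new min-set coincides with $min_{\preceq^{\textbf{mini}}_{\sigma}}p'$). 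Combined with $t\notin p$ (so $t$ is never in the relevant min-set either), this shows that no Phase 1 revision moves $s$ or $t$, so $s\simeq t$ survives Phase 1.

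During Phase 2 only observations $p$ with $s,t\in p$ appear. Whenever $s\simeq^{\textbf{mini}}_{\sigma}t$, totality and transitivity give $s\in min_{\preceq^{\textbf{mini}}_{\sigma}}p$ iff $t\in min_{\preceq^{\textbf{mini}}_{\sigma}}p$, so the revision either moves both to the new front (keeping $s\simeq t$) or neither (preserving the relative order). Hence $s\simeq t$ is maintained forever, so at every stage $L^{\textbf{PLAUS}}_{\textbf{mini}}(\textbf{S},\vec{O}[n])$ contains both or neither of $s,t$ and so never equals $\{s\}$, contradicting appropriateness. The main obstacle is formulating the Phase 1 invariant at the level of min-sets: the naive version ``$s\simeq^{\textbf{mini}}_{\sigma}t$ after every prefix'' is not directly preserved under the induction, since one must explicitly rule out $s$ becoming minimal in some $p'\in N_t$ at an intermediate step before concluding that $s$ and $t$ stay tied.
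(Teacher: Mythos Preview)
Your proof is correct and follows the same two-phase stream construction as the paper's own argument. Your explicit Phase~1 invariant ($s\notin min_{\preceq^{\textbf{mini}}_{\sigma}}p'$ for every $p'\in N_t$, maintained by the $M\cap p'$ case split) is in fact more careful than the paper's version, which simply asserts $s\simeq^{\textbf{mini}}_{\vec{O}[n]}t$ with the parenthetical ``otherwise one of them would be a minimal element of some $p\in\vec{O}[n]$'' without checking that $s$ remains non-minimal in each $p'\in N_t$ under the \emph{updated} orders---precisely the subtlety you flag in your last sentence.
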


\begin{proof}
    Towards a contradiction, assume that there are $s,t\in S$ such that: $s\simeq t$, $t\not= s$; $|\{p\in O_s: t\not\in p\}|<|\mathbb{N}|$; for all $p\in O_s$ if $s\in min_{\preceq}p$ then $t\in p$. Let $\vec{O}$ be a stream 
    and $n\in\mathbb{N}$ be such that $set(\vec{O}[n])=\{p\in O_s: t\not\in p\}$, which 
    exists since $|\{p\in O_s: t\not\in p\}|<|\mathbb{N}|$. 
    Denote with $\preceq^{\textbf{mini}}_{\vec{O}[n]}$ the order in 
    $\textbf{mini}(\textbf{PLAUS}(\textbf{S}), \vec{O}[n])$. Since by assumption for all $p\in O_s$ if $t\not\in p$ then $s\not\in min_\preceq p$, and since by construction for all $p\in set(\vec{O}[n])$, $t\not\in p$, we have that $s\not\in min_{\preceq}p$ for all $p\in set(\vec{O}[n])$. It follows that $s\simeq^{\textbf{mini}}_{\vec{O[n]}}t$ (otherwise one of them would be a minimal element of some $p\in \vec{O}[n]$). 
    Consider now any extension of the sequence $\vec{O}[n]$ that enumerates all remaining elements of $O_s$, i.e. all elements in $O_s\setminus\{p\in O_s:t\not\in p\}$. By construction all $p\in set(\vec{O})$ occurring after the initial sequence $\vec{O}[n]$ are such that $t\in p$. Since $t\simeq^{\textbf{mini}}_{\vec{O}[n]}s$ and $\vec{O}$ is sound, it follows that for all $m\geq n$, $t\simeq^{\textbf{mini}}_{\vec{O}[m]}s$ by Def.~\ref{def:min_update}. Thus, there is no $m$ such that $min_{\preceq^\textbf{mini}_{\vec{O}[m]}}=\{s\}$. Hence, $\preceq$ is not appropriate for learning $S$ via \textbf{mini}.
\end{proof}

The lemma~\ref{lemma:infinity_necessary_mini} ensures that an appropriate plausibility order cannot be deceived by sound and complete data streams such as the one illustrated in Figure~\ref{fig:bad_prior_mini}. 
The problem there is that a distinct state from $s$, namely $u$, is equi-plausible to $s$ and minimal in all observables in which $s$ is minimal. 
Lemma~\ref{lemma:infinity_necessary_mini} excludes such configurations of points in finite epistemic spaces.

\begin{lemma}
    Let $\textbf{S}=\langle S,O\rangle$ be an epistemic space, and let $\preceq$ be a plausibility order over $S$. Consider a state $s$ and assume that there exists a $p\in O_s$ such that $s\in min_{\preceq}p$. Then for any finite sequence $\vec{O}[n]$ of a sound data stream $\vec{O}$ for $s$, it holds that $s\in min_{\preceq^{\textbf{mini}}_{\vec{O[n]}}}p$. 
    \label{lemma:mini_finite_min_persistence}
\end{lemma}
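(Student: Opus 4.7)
The plan is to prove this by induction on $n$, showing that the minimality of $s$ in $p$ is preserved at each revision step, using the crucial fact that $s$ being present in every observed set (soundness) combines with transitivity to force $s$ to ``ride along'' whenever another element of $p$ gets promoted.

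The base case $n=0$ is immediate: $\preceq^{\textbf{mini}}_{\vec{O}[0]} = \preceq$ and the hypothesis gives $s \in min_\preceq p$. For the inductive step, assume $s \in min_{\preceq^{\textbf{mini}}_{\vec{O}[n]}} p$ and let $q$ be the $(n+1)$-th observation in $\vec{O}$. By soundness, $q \in O_s$ and hence $s \in q$. Writing $\preceq_n$ for $\preceq^{\textbf{mini}}_{\vec{O}[n]}$ and $\preceq_{n+1}$ for $\preceq^{\textbf{mini}}_{\vec{O}[n+1]}$, I need to show that for every $t \in p$, $s \preceq_{n+1} t$.

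I would split into cases based on whether $t \in min_{\preceq_n} q$. If $t \notin min_{\preceq_n} q$, then two subcases arise: either $s \in min_{\preceq_n} q$, in which case Def.~\ref{def:min_update} promotes $s$ strictly above $t$, giving $s \prec_{n+1} t$; or $s \notin min_{\preceq_n} q$, in which case neither is promoted, so $\preceq_{n+1}$ agrees with $\preceq_n$ on $\{s,t\}$, and the inductive hypothesis yields $s \preceq_n t$. The delicate case is $t \in min_{\preceq_n} q$. Here the key observation is that $s$ must also be in $min_{\preceq_n} q$: by the inductive hypothesis $s \preceq_n t$ (since $t \in p$ and $s \in min_{\preceq_n} p$), and $t \preceq_n r$ for every $r \in q$; by transitivity $s \preceq_n r$ for every $r \in q$, and since $s \in q$ this means $s \in min_{\preceq_n} q$. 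Then both $s$ and $t$ fall in the ``promoted'' class, which by Def.~\ref{def:min_update} retains their internal order, so $s \preceq_n t$ lifts to $s \preceq_{n+1} t$.

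The main (and really only) obstacle is to notice in this last case that minimality of $t$ in $q$ cannot demote $s$ below $t$, because $s$ is forced into $min_{\preceq_n} q$ as well — this is where soundness of the stream for $s$ is essential. Once this is in place, combining the three cases gives $s \preceq_{n+1} t$ for all $t \in p$, and since $s \in p$ throughout (the observables are fixed), we conclude $s \in min_{\preceq_{n+1}} p$, completing the induction.
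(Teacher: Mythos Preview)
Your proposal is correct and takes essentially the same approach as the paper: both proofs proceed by induction on $n$, with the crux being that if some $t\in p$ is promoted by the new observation $q$ (i.e., $t\in min_{\preceq_n}q$), then $s$ must be promoted too, since the inductive hypothesis gives $s\preceq_n t$ and soundness gives $s\in q$. The only cosmetic difference is that the paper packages the inductive step as a proof by contradiction (assuming $s\notin min_{\preceq_{n+1}}p$, picking a witness $t\prec_{n+1}s$, and ruling out each possibility), whereas you do a direct forward case analysis on $t$; your version is arguably a bit cleaner.
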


\begin{proof}
    We prove this by induction on $n$. [Base case: $n=0$] Follows directly from the assumption. [Inductive step] Assume that for all $m<n$, $s\in min_{\preceq^{\textbf{mini}}_{\vec{O}[m]}}p$. Consider now the $p'$ occurring in $\vec{O}$ after the initial segment $\vec{O}[m]$. Assume towards a contradiction that $s\not\in min_{\preceq^{\textbf{mini}}_{\vec{O}[m+1]}}p$. Hence there is $t\in p$ such that $t\prec^{\textbf{mini}}_{\vec{O}[m+1]} s$. Since $s\in min_{\preceq^{\textbf{mini}}_{\vec{O}[m]}}p$ by inductive hypothesis, then $t\not\prec^{\textbf{mini}}_{\vec{O}[m]}s$, hence $s\preceq^{\textbf{mini}}_{\vec{O}[m]}t$, since $\preceq$ is total. Since $s\not\in min_{\preceq^{\textbf{mini}}_{\vec{O}[m]}}p'$ (otherwise it would be a minimal element in $p$ after updating on $p'$), there is $t'\prec^{\textbf{mini}}_{\vec{O}[m]}s$, and $t\in p'$. By transitivity, $t'\prec^{\textbf{mini}}_{\vec{O}[m]}t$. Hence, $t\not\in min_{\preceq^{\textbf{mini}}_{O[m]}}p'$. Thus by Def.~\ref{def:min_update}, $s\preceq^{\textbf{mini}}_{\vec{O}[m+1]}t$, which contradicts the assumption that $t\prec^{\textbf{mini}}_{\vec{O}[m+1]}s$. 
\end{proof}

To characterise the plausibility orders appropriate for minimal revision, we make use of the concept of a \textit{finite tell-tale map}. We have already seen a concept of this kind in Definition \ref{def:dfttm}. The concept was first introduced by Dana Angluin in her seminal work \cite{ANGLUIN1980117}. Broadly speaking, a finite tell-tale map 
assigns to each possible state a special finite set of observables (a finite tell-tale), which guides learning and guarantees that the state can be identified. 
We consider a specific variant of tell-tale maps that satisfies a number of conditions regarding not only the available observables but also the plausibility order of the states.

\begin{definition}
Let $\textbf{S}=\langle S, O\rangle$ be a finite epistemic space and $\preceq$ be a plausibility order over $S$ be given. A \textbf{finite space mini tell-tale map} for $\preceq$ and $\textbf{S}$ is a total map assigning to each state $s\in S$ a \textbf{finite space mini tell-tale set} $F_s\subseteq O$ such that: (i) $F_s\subseteq O_s$; (ii) there is a $p\in F_s$ such that $s\in min_{\preceq}p$ and for all $t\not= s$, $t\simeq s$ such that $t\in min_{\preceq}p $, there exists $q\in F_s$ such that $s\in min_{\preceq} q$ and $t\not\in min_{\preceq} q$ .
\label{def:mini_tell-tale_finite}
\end{definition}

\begin{theorem}
     Let $\preceq$ be a plausibility order on a finite epistemic space $\textbf{S}=\langle S, O\rangle$. The plausibility order $\preceq$ is appropriate to learn $\textbf{S}$ via \textbf{mini} iff there exists a finite space mini tell-tale map for $S$ and $\preceq$. \label{theorem:general_characterisation_mini}
\end{theorem}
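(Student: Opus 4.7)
The plan is to prove both directions, with the backward direction being the substantive part, since it requires tracking how the globally minimal set $M_k := \min_{\preceq^{\textbf{mini}}_{\vec{O}[k]}} S$ evolves under iterated minimal revision.

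For the forward direction $(\Rightarrow)$, assume $\preceq$ is appropriate. Fix $s \in S$, and by Lemma~\ref{lemma:atleastonemin} pick some $p_s \in O_s$ with $s \in \min_\preceq p_s$. Since $\textbf{S}$ is finite, $O \subseteq \mathcal{P}(S)$ is finite too, so Lemma~\ref{lemma:infinity_necessary_mini} applies: for each $t \neq s$ with $t \simeq s$ there is $q_{s,t} \in O_s$ with $s \in \min_\preceq q_{s,t}$ and $t \notin q_{s,t}$, hence $t \notin \min_\preceq q_{s,t}$. Any $t \in (\min_\preceq p_s) \setminus \{s\}$ automatically satisfies $t \simeq s$ by transitivity, so I can set $F_s := \{p_s\} \cup \{q_{s,t} : t \in (\min_\preceq p_s) \setminus \{s\}\}$. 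Conditions (i) and (ii) of Definition~\ref{def:mini_tell-tale_finite} then hold by construction, witnessed by $p_s$.

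For the backward direction $(\Leftarrow)$, fix $s$, any sound and complete stream $\vec{O}$ for $s$, and let $p \in F_s$ be the observable distinguished by (ii). Since $F_s$ is finite, there is $N$ with $F_s \subseteq set(\vec{O}[N])$; let $m_p \leq N$ be the first step at which $p$ is observed. By Lemma~\ref{lemma:mini_finite_min_persistence}, $s \in \min_{\preceq^{\textbf{mini}}_{\vec{O}[m_p - 1]}} p = M_{m_p}$. The pivotal observation is that whenever $s \in M_k$, all elements of $M_k$ are equi-plausible and lie strictly below all other states in $\preceq^{\textbf{mini}}_{\vec{O}[k]}$, so for any $X \subseteq S$ with $X \cap M_k \neq \emptyset$ one has $\min_{\preceq^{\textbf{mini}}_{\vec{O}[k]}} X = X \cap M_k$. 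Applied to $X = O_k$, which contains $s$ by soundness, this yields $M_{k+1} = M_k \cap O_k \subseteq M_k$; hence by induction $s \in M_k$ and $M_k$ is non-increasing for all $k \geq m_p$.

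It then remains to expel each $t \neq s$ from $M_k$; finiteness of $S$ will then deliver a uniform bound $N^*$ with $M_k = \{s\}$ for $k \geq N^*$. I split into three cases. (a) If $t \notin p$, then $M_{m_p} \subseteq p$ excludes $t$. (b) If $t \in p$ and $s \prec t$, a short induction on $k$ using soundness shows $s \prec^{\textbf{mini}}_{\vec{O}[k]} t$ is preserved at every step: the only way $t$ could overtake $s$ is $t \in \min_{\preceq^{\textbf{mini}}_{\vec{O}[k]}} O_k$ with $s \notin \min_{\preceq^{\textbf{mini}}_{\vec{O}[k]}} O_k$, which forces $s \notin O_k$, contradicting soundness. (c) If $t \in p$ and $s \simeq t$, then $t \in \min_\preceq p$ by transitivity, so (ii) supplies $q_t \in F_s$ with $s \in \min_\preceq q_t$ and $t \notin \min_\preceq q_t$. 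The main obstacle I anticipate is case (c): one must first observe that the conditions on $q_t$ in fact force $t \notin q_t$, not merely $t \notin \min_\preceq q_t$; for if $t \in q_t$, then $s \simeq t$ together with $s \in \min_\preceq q_t$ gives $t \preceq y$ for every $y \in q_t$ by transitivity, placing $t$ in $\min_\preceq q_t$, a contradiction. Once $t \notin q_t$ is secured, Lemma~\ref{lemma:mini_finite_min_persistence} applied to $q_t$ gives $M_{m_{q_t}} \subseteq q_t$, so $t \notin M_{m_{q_t}}$, and the monotonicity of $M_k$ concludes the case.
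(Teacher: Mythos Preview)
Your proof is correct. The forward direction matches the paper's (both invoke Lemmas~\ref{lemma:atleastonemin} and~\ref{lemma:infinity_necessary_mini}; you construct $F_s$ explicitly while the paper simply takes $F_s=O_s$). In the backward direction your organisation differs: you isolate the monotonicity of $M_k := \min_{\preceq^{\textbf{mini}}_{\vec{O}[k]}} S$ once $s$ has entered it as the key structural fact, and then dispatch each $t\neq s$ by exhibiting a single step at which it is excluded. The paper instead fixes the \emph{first} observed $p$ with $s\in\min_\preceq p$, and for any $t$ still minimal at that stage argues that $t\simeq s$ and $t\in\min_\preceq p$ hold in the \emph{original} order---tacitly using that \textbf{mini} never creates new $\simeq$-pairs---before invoking clause~(ii). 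Your monotonicity device makes the ``never returns'' part automatic and avoids that implicit step; the paper's route in turn sidesteps your case~(b) by restricting attention to those $t$ that actually lie in $M_m$.
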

\begin{proof}

    [$\Rightarrow$] Let $\textbf{S}=\langle S, O\rangle$ be finite, and let $\preceq$ be 
  appropriate for learning $\textbf{S}$ via \textbf{mini}. 
     Since $O_s\subseteq O_s$, it trivially satisfies (i) from Def.~\ref{def:mini_tell-tale_finite}.
     Assume towards a contradiction that for all $p\in O_s$, either $p\not\in min_{\preceq}p$ or there exists a $t\not=s$ such that $t\in min_{\preceq}p$ and there is no $q\in O_s$ such that $s\in min_{\preceq}q$ and $t\not\in min_{\preceq}q$. By Lemma~\ref{lemma:atleastonemin} we know that there is a $p\in O_s$ such that $s\in min_{\preceq}p$. By assumption, it follows that there exists a $t\in min_{\preceq}p$ such that there is no $q\not=p\in O_s$ such that $s\in min_{\preceq}q$ and $t\not\in min_{\preceq}q$. By Lemma~\ref{lemma:infinity_necessary_mini}, it follows that the set $|\{p\in O_s:t\not\in p\}|=|\mathbb{N}|$. Contradiction. 

     [$\Leftarrow$] Let $\textbf{S}=\langle S,O\rangle$ be a finite epistemic space, and let $\preceq$ be given. Assume there exists a finite space mini tell-tale map for $\preceq$ and $S$. Let $s\in S$ be arbitrary and $F_s$ be its corresponding finite space mini tell-tale set. 
 Since $S$ is finite, also $O_s$ is a finite space mini tell-tale set. 
Consider an arbitrary sound and complete stream $\vec{O}$ for $s$. Since $O_s$ is finite and $\vec{O}$ is complete, there exists an $n\in \mathbb{N}$ such that $O_s\subseteq set(\vec{O}[n])$. Since $O_s$ is a finite space mini tell-tale set for $s$, by Def.~\ref{def:mini_tell-tale_finite} there exists a $p\in O_s$ such that $s\in min_{\preceq}p$. 
Let $\vec{O}[m]$ be the sequence where the last element is the first occurrence of a $p\in O_s$ such that $s\in min_{\preceq}p$ in $\vec{O}$. By Lemma~\ref{lemma:mini_finite_min_persistence}, $s\in min_{\preceq^{\textbf{mini}}_{\vec{O}[m-1]}}p$. By Def.~\ref{def:min_update}, $s\in min_{\preceq^{\textbf{mini}}_{\vec{O}[m]}}$, and thus $s\in L^{\textbf{PLAUS}}_{\textbf{mini}}(S,O[m])$. If there is no $t\not=s$ such that $t\in  L^{\textbf{PLAUS}}_{\textbf{mini}}(S,O[m])$, then for all $m'\geq m$, $L^{\textbf{PLAUS}}_{\textbf{mini}}(S,O[m])=\{s\}$ (since $\vec{O}$ is sound with respect to $s$ will forever be the uniquely minimal element for all $p\in O_s$).  If there is a $t\not=s$ such that $t\in L^{\textbf{PLAUS}}_{\textbf{mini}}(S,O[m])$, then $t\simeq^{\textbf{mini}}_{\vec{O}[m]} s$. This implies that $s\simeq t$, and $t\in min_{\preceq}p$. Since by assumption $\textbf{S}$ has a finite space mini tell-tale, there is $q$ such that $s\in min_{\preceq}q$ and $t\not\in min_{\preceq}q$, thus $t\not\in q$. Since $p$ was the first occurrence of a $p\in O_s$ such that $s\in min_{\preceq}p$, since $q\in O_s$ and $s\in min_{\preceq}q$, there is $m'>m$ such that $q$ is the last observable occurring in the initial segment $\vec{O}[m']$ of $\vec{O}$, since $\vec{O}$ is complete. Take the smallest such $s'$. Since $s\in min_{\preceq^{\textbf{mini}}_{\vec{O}[m]}}$, and $\vec{O}$ is sound with respect to $s$, $s\in min_{\preceq^\textbf{mini}_{\vec{O}[m'-1]}}$ and also $s\in min_{\preceq^\textbf{mini}_{\vec{O}[m'-1]}}q$, and since $t\not\in q$, $s\prec^\textbf{mini}_{\vec{O}[m']}t$ by Def.~\ref{def:min_update}. Since $t$ was arbitrary, for all $t\in S$ such that $t\not=s$ and $t\simeq^\textbf{mini}_{\vec{O}[m]}$, there is $k>m$ such that $s\prec^\textbf{mini}_{\vec{O}[k]}t$. Since the space is finite, there can only be finitely many $t\not=s$ such that $t\in min_{\preceq^\textbf{mini}_{\vec{O}[m]}}$, and thus there is $k'$ such that for all such $t\in S$, $s\prec^\textbf{mini}_{\vec{O}[k']}t$, and $min_{\preceq^\textbf{mini}_{\vec{O}[k']}}=\{s\}$. By Def.~\ref{def:min_update} and the soundness of $\vec{O}$, for all $k''\geq k$, $L^{\textbf{PLAUS}}_{\textbf{mini}}(S,O[k''])=\{s\}$.
\end{proof}


\subsection{No Guarantees for Minimal Revision when Learning with Errors}
\label{sec:learning_errors}
Theorem~\ref{theorem:general_characterisation_mini} characterises the appropriate priors when learning via \textbf{mini} under the assumption that no erroneous observation can occur (i.e., the data streams are sound). Does the characterisation still hold when considering learning scenarios in which errors can occur?
Here, we consider fair streams \cite{baltag2019truth}, i.e., data streams that can contain only finitely many errors, all of which will eventually be corrected. 
\begin{definition}[\textrm{\cite[Def.~$27$]{baltag2019truth}}]
    Let $\textbf{S}=\langle S, O\rangle$ be a negation-closed epistemic space. A \textbf{data stream} $\vec{O}$ from $O$ is \textbf{fair with respect to the state} $s$ precisely in case: (i) $\vec{O}$ is complete with respect to $s$; (ii) there is an $n\in \mathbb{N}$ such that for all $k\geq n$, $s\in O_k$; (iii) for every $i\in \mathbb{N}$ such that $s\not\in O_i$, there is a $k>i$ such that $O_k=\bar{O_i}$.\footnote{All definitions in Section~\ref{sec:preliminary} and Definition~\ref{def:appropriateness} can be straightforwardly adapted for the case of fair data-streams.}
    \label{def:fair_streams}
\end{definition}

When considering fair data streams, not only Theorem~\ref{theorem:general_characterisation_mini} is no longer an adequate characterisation of the appropriate plausibility orders to learn via \textbf{mini}, but also \textbf{mini} is not anymore a universal method on the class of negation-closed finite epistemic spaces.\footnote{\textbf{mini}'s sensitivity to errors was highlighted in \cite{Papadamos:2023ur} (Proposition 3.3), where it was applied to cases of cognitive bias, in which some observations are (systematically) misinterpreted by the revising agent.} 

\begin{proposition}
    The following two hold: (i) There exist plausibility orders appropriate for learning some negation-closed space via \textbf{mini} on sound and complete data streams that are not appropriate for learning the same space via \textbf{mini} on fair data streams; (ii) Minimal revision is not universal on fair data-streams on the class of all finite negation-closed epistemic spaces.     \label{prop:not_fair} 
\end{proposition}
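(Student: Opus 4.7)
The plan is to use a single finite negation-closed epistemic space as a common witness for both parts. Let $\textbf{S} = \langle S, O\rangle$ with $S = \{s, t, u\}$ and $O = \{p, \bar{p}, q, \bar{q}\}$, where $O_s = \{p, q\}$, $O_t = \{\bar{p}, q\}$, $O_u = \{p, \bar{q}\}$. Then $p = \{s, u\}$, $\bar{p} = \{t\}$, $q = \{s, t\}$, $\bar{q} = \{u\}$, so $\textbf{S}$ is finite and negation-closed; by Corollary~\ref{th:negation_closed}, it is learnable via \textbf{mini} on sound-and-complete streams.

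The workhorse for both parts is the fair stream for $s$ given by $\vec{O} = (\bar{p}, \bar{q}, p, q, p, q, \ldots)$. One checks that $\vec{O}$ is fair with respect to $s$ in the sense of Definition~\ref{def:fair_streams}: only positions $0$ and $1$ carry observables outside $O_s$ and each is corrected by its complement at positions $2$ and $3$, while $\vec{O}$ is complete. The structural key is that $\bar{p}$ and $\bar{q}$ are singletons, so \textbf{mini}'s response to them is forced: starting from any prior $\preceq$ on $S$, the first revision moves $t$ strictly to the top while preserving the relative order of $\{s, u\}$, and the second moves $u$ strictly to the top while preserving the relative order of $\{s, t\}$, which by that point satisfies $t \prec s$. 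Hence, irrespective of $\preceq$, after the two error observations the order is $u \prec t \prec s$. From this state, observing $p$ gives $\min_{\preceq} p = \{u\}$ (since $u \prec s$) and leaves the order unchanged, while observing $q$ gives $\min_{\preceq} q = \{t\}$ (since $t \prec s$) and rearranges to $t \prec u \prec s$. The subsequent alternation of $p$ and $q$ permanently swaps $u$ and $t$ at the top, with $s$ pinned at the bottom, so $s$ is never uniquely minimal and the induced learner never stabilizes on $\{s\}$.

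Because this argument is uniform in $\preceq$, no plausibility order is appropriate for learning $\textbf{S}$ via \textbf{mini} on fair streams, establishing (ii). For (i), take the uniform order $\preceq$ with $s \simeq t \simeq u$; the map $F_s = \{p, q\}$, $F_t = \{\bar{p}, q\}$, $F_u = \{p, \bar{q}\}$ is a finite space mini tell-tale for $\textbf{S}$ and $\preceq$ in the sense of Definition~\ref{def:mini_tell-tale_finite} (each state is minimal in every observable containing it under the uniform order, and for each state the second element of its tell-tale excludes its sole equi-plausible competitor in the first). Hence by Theorem~\ref{theorem:general_characterisation_mini} this prior is appropriate for sound-and-complete streams, while the calculation above shows it fails on $\vec{O}$. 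The main obstacle is the order-after-errors invariant used throughout: without it one would need to enumerate many configurations of the initial order, but because the two errors are singletons, \textbf{mini} collapses every prior into the single order $u \prec t \prec s$, after which the stable oscillation that excludes $s$ follows uniformly.
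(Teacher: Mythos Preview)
Your proof is correct. Both parts are fully justified, and the singleton-observable trick to collapse every prior into $u\prec t\prec s$ is a nice uniform argument.

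The route differs from the paper's in two ways. First, the paper uses the four-state space of Figure~\ref{fig:dfttspaces} (center), whereas you use a three-state space tailored so that $\bar p$ and $\bar q$ are singletons. Second, and more substantively, the paper establishes (ii) indirectly: it asserts that the uniform order is the \emph{only} prior appropriate for that four-state space on sound-and-complete streams, and then shows the uniform order fails on a specific fair stream. Your argument bypasses this uniqueness claim entirely: because your two erroneous observables are singletons, \textbf{mini} is forced into the fixed order $u\prec t\prec s$ regardless of the prior, and the subsequent $p/q$ oscillation uniformly excludes $s$. This gives a direct proof that no prior works, which is cleaner and slightly stronger than the paper's reduction to uniqueness. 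What the paper's approach buys is reuse of an example already on display; what yours buys is self-containment and no appeal to an unproved uniqueness step.
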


\begin{proof} 

[Point (i)] Consider the epistemic space at the center of Fig.~\ref{fig:dfttspaces}. The space is negation-closed and the plausibility order in which for all $s,t\in S$, $t\simeq s$ is appropriate to identify the space via \textbf{mini} on sound and complete data-streams by the proof of Th.~\ref{prop:at_least_DFTT_mini}. Consider the state $s$ and consider the data stream starting with the sequence $q,\bar{p},p$ followed by an infinite stream of $\bar{q}$. This stream is fair and prevents \textbf{mini} from ever identifying $s$, since $w$ will always be a minimal element with $s$ after the initial segment $q,\bar{p},p$. [Point (ii)] Consider again the epistemic space at the center of Fig.~\ref{fig:dfttspaces}. The only appropriate plausibility order to learn that space via \textbf{mini} on sound and complete data streams is such that $s,t\in S, t\simeq s$. This order is not appropriate for learning the space via \textbf{mini} on fair data streams as shown above for (i).
\end{proof}
We leave to future work a characterisation of the plausibility assignments that are appropriate for minimal revision on fair data streams (and of the class of spaces on which they exist).

\section{What Priors are Appropriate for Stronger Methods?}
\label{sec:stronger}

We turn to considering how one ought to assign priors when employing more radical belief revision methods than minimal revision. In particular, we consider \textit{conditioning} and \textit{lexicographic upgrade}. 

\begin{definition}[Conditioning \cite{Baltag2016smetss}]
\label{def:conditioning}
    Let $\textbf{B}=\langle S, O, \preceq \rangle$ be a plausibility space, and let $p\in O$. The one-step revision function $\textbf{cond}(\textbf{B}, p)$ generates the plausibility space $\textbf{B}'=\langle S', O', \preceq' \rangle$, where $S'=S\cap p$, $O'=\{q\in \mathcal{P}(S'): \exists q'\in O \textrm{ such that } q=q'\setminus\bar{p}\}$, and ${\preceq}'= {\preceq_p}$.
\end{definition}
\begin{definition}[Lexicographic Upgrade \cite{Baltag2016smetss}]
Let $\textbf{B}=\langle S, O, \preceq \rangle$ be a plausibility space, and $p\in O$. The one-step revision function $\textbf{lex}(\textbf{B}, p)$ generates the plausibility space $\textbf{B}'=\langle S, O, \preceq' \rangle$, where $s\preceq' t$ if and only if: $s\preceq_p t$, or $s\preceq_{\bar{p}}t$, or $s\in p$ and $t\not\in p$.
\label{def:lex_update}
\end{definition}

When revising with \textbf{cond} upon receiving some observation  $p$, an agent simply eliminates all the worlds that do not satisfy $p$, and restricts the plausibility order to the worlds that do satisfy $p$. On the other hand, lexicographic upgrade does not eliminate worlds, but rather moves all the $p$-worlds in front of the $\neg{p}$-worlds (\cite{Spohn1988, vanBenthem07}). 

For the case of conditioning and lexicographic revision, we give a fully general characterisation of appropriate plausibility orders. In particular, we consider spaces that are not necessarily finite, as well as preorders that are not necessarily total.\footnote{Definition~\ref{def:appropriateness} can be straightforwardly adapted to the case of arbitrary preorders.} As done for \textbf{mini}, we start by defining a tell-tale map variant.

\begin{definition}
Let $\textbf{S}=\langle S, O\rangle$ be an epistemic space and $\preceq$ be a preorder over $S$ be given. A \textbf{generalised conditioning tell-tale map} for $\preceq$ and $\textbf{S}$ is total map assigning to each state $s\in S$ a set $F_s\subseteq O$ such that: (i) $F_s$ is finite; (ii) $F_s\subseteq O_s$; (iii) for all $t\in S$, if $t\sim s$, $F_s\subseteq O_t$ and $s\not=t$, then $s\prec t$; (iv) for all finite $F'_s\supseteq F_s\subseteq O_s$, for all $t\in S$, if $t\not\sim s$ and $F'_s\subseteq O_t$, then 
             there exists a $v\in S$ such that $v\prec t$ and $F'_s\subseteq O_v$.
    \label{def:gen_cond_telltale}
\end{definition}

One can then prove that the existence of a generalised conditioning tell-tale characterises appropriate preorders for conditioning as well as lexicographic upgrade:

\begin{theorem}
Let $\preceq$ be a preorder over an epistemic space $\textbf{S}$. The following are equivalent: (i) There exists a conditioning tell-tale map for $\textbf{S}$ and $\preceq$; (ii) $\preceq$ is appropriate to learn $\textbf{S}$ via \textbf{cond}; (iii) $\preceq$ is appropriate to learn $\textbf{S}$ via \textbf{lex}.
\label{cor:samecondlex}
\end{theorem}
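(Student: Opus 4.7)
The plan is to prove the theorem by separately establishing (ii) $\Leftrightarrow$ (iii) and (i) $\Leftrightarrow$ (ii). For (ii) $\Leftrightarrow$ (iii), the key observation is that on a sound data sequence $\vec{O}[n] = (p_1,\dots,p_n)$ for $s$, iterated \textbf{lex} induces an order in which worlds satisfying more of the recent observations sit strictly below those satisfying fewer, with ties broken by the original $\preceq$; since $s$ satisfies every $p_i$ by soundness, the group of worlds satisfying \emph{all} the $p_i$---which is exactly the iterated-\textbf{cond} surviving set $S_n := \{u : set(\vec{O}[n]) \subseteq O_u\}$---lies at the very bottom of $\preceq^{\textbf{lex}}_{\vec{O}[n]}$, and the restriction of $\preceq^{\textbf{lex}}_{\vec{O}[n]}$ to $S_n$ coincides with $\preceq^{\textbf{cond}}_{\vec{O}[n]}$. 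A straightforward induction on $n$ then gives $min_{\preceq^{\textbf{lex}}_{\vec{O}[n]}} = min_{\preceq^{\textbf{cond}}_{\vec{O}[n]}} S_n = L^{\textbf{PLAUS}}_{\textbf{cond}}(\textbf{S},\vec{O}[n])$, so the two notions of appropriateness coincide on every sound and complete stream.

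For (i) $\Rightarrow$ (ii), I fix a state $s$ and a sound and complete data stream $\vec{O}$ for $s$. By finiteness of $F_s$ and completeness there is $N$ with $F_s \subseteq set(\vec{O}[N])$. For each $n \geq N$, set $F'_n := set(\vec{O}[n])$ and $S_n := \{u : F'_n \subseteq O_u\}$; soundness gives $F'_n \subseteq O_s$, so $s \in S_n$. To conclude $min_\preceq S_n = \{s\}$, I split any $t \in S_n \setminus \{s\}$ into two cases: if $t \sim s$, then condition (iii) of Def.~\ref{def:gen_cond_telltale} directly yields $s \prec t$; if $t \not\sim s$, condition (iv) applied with the superset $F'_n$ of $F_s$ supplies a witness $v \prec t$ with $F'_n \subseteq O_v$, whence $v \in S_n$. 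Either way $t$ is not $\preceq$-minimal in $S_n$, so $L^{\textbf{PLAUS}}_{\textbf{cond}}$ stabilises at $\{s\}$ from stage $N$ onward.

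The main obstacle is (ii) $\Rightarrow$ (i), because the tell-tale must serve \emph{uniformly} for every finite extension, while appropriateness \textit{a priori} provides only a per-stream stabilisation stage. I would first show, by contradiction via a diagonal argument, that appropriateness yields a finite $F_s \subseteq O_s$ with the strong property that $min_\preceq S_{F'_s} = \{s\}$ for \emph{every} finite $F'_s$ with $F_s \subseteq F'_s \subseteq O_s$: if no such $F_s$ existed, then, enumerating $O_s = \{q_0, q_1, \dots\}$ and inductively padding the current finite prefix with elements of $O_s$ so as to reach a set $F$ containing $q_n$ but with $min_\preceq S_F \neq \{s\}$, one would construct a sound and complete stream on which \textbf{cond} fails to stabilise at $\{s\}$, contradicting (ii). Given such an $F_s$, property (iii) follows from a short preorder calculation: any $t \simeq s$ in $S_{F_s}$ would itself be $\preceq$-minimal in $S_{F_s}$ (transporting comparabilities through $s$ via transitivity), forcing $t = s$ by uniqueness. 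Property (iv) is witnessed directly, because for $t \not\sim s$ with $F'_s \subseteq O_t$ the world $t$ lies in $S_{F'_s}$, so its non-minimality there supplies the required $v \prec t$ already inside $S_{F'_s}$, i.e.\ with $F'_s \subseteq O_v$.
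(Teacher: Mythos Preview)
Your argument is correct and takes a genuinely different route from the paper. The paper establishes the theorem by proving two separate equivalences, (i)$\Leftrightarrow$(ii) and (i)$\Leftrightarrow$(iii), each via the locking-sequence lemma; the treatment of \textbf{lex} therefore duplicates much of the \textbf{cond} proof. You instead factor through (ii)$\Leftrightarrow$(iii) by observing that on any sound sequence the canonical learners $L^{\textbf{PLAUS}}_{\textbf{cond}}$ and $L^{\textbf{PLAUS}}_{\textbf{lex}}$ produce identical conjectures: every world in $S_n$ is strictly $\preceq^{\textbf{lex}}_{\vec{O}[n]}$-below every world outside $S_n$, and the restriction of $\preceq^{\textbf{lex}}_{\vec{O}[n]}$ to $S_n$ is just $\preceq$, so $min_{\preceq^{\textbf{lex}}_{\vec{O}[n]}}=min_{\preceq}S_n$. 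This is cleaner and saves the entire \textbf{lex}-specific half of the paper's argument. Your (i)$\Leftrightarrow$(ii) is essentially the paper's proof of Proposition~\ref{prop:cond}, with the locking-sequence lemma replaced by its standard diagonal construction.

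Two small points. First, your verification of condition~(iii) in Def.~\ref{def:gen_cond_telltale} only spells out the case $t\simeq s$; you should also record that $t\prec s$ is impossible because $s$ is $\preceq$-minimal in $S_{F_s}$ (this is trivial but needed, since (iii) quantifies over all $t\sim s$). Second, your informal description of the iterated-\textbf{lex} order as ranking worlds by ``how many recent observations they satisfy'' is not literally accurate for non-total preorders; what you actually need (and what your induction proves) is only the weaker fact that every $u\in S_n$ ends up strictly below every $w\notin S_n$, and that $\preceq^{\textbf{lex}}_{\vec{O}[n]}$ restricted to $S_n$ equals $\preceq$. You might state that directly rather than the stronger heuristic.
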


It is easy to see that the equivalence between $(i)$ and $(ii)$ in Theorem~\ref{cor:samecondlex} fails when considering fair streams (Definition~\ref{def:fair_streams}): Since \textbf{cond} irreparably eliminates the actual state as soon as an error occurs,  
no preorder is appropriate to learn via \textbf{cond} on fair streams. Finally, since any lexicographic upgrade on a finite initial segment of a fair data stream can be shown to be equivalent to the lexicographic upgrade obtained by removing the corrected errors from the sequence \cite{baltag2019truth}, the equivalence between $(i)$ and $(iii)$ in Theorem~\ref{cor:samecondlex} still holds, and thus the existence of a generalised conditioning tell-tale map also characterises the appropriate preorders for learning via \textbf{lex} on fair streams in negation-closed spaces.

\section{Conclusion}
\label{sec:conclusion}

We showed that minimal revision is universal when used to learn finitely identifiable spaces (Theorem~\ref{prop:at_least_DFTT_mini}), and that it is consequently universal on finite strongly separated spaces (Theorem~\ref{th:separated_mini}). Thus, it can always be used to learn from positive and negative data on finite spaces (Corollary~\ref{th:negation_closed}) unless erroneous observations possibly occur (Proposition~\ref{prop:not_fair}). In addition, we characterised the prior plausibility orders that are appropriate for learning finite spaces via minimal revision (Theorem~\ref{theorem:general_characterisation_mini}) as well as the preorders appropriate for conditioning and lexicographic upgrade (Theorem~\ref{cor:samecondlex}). 

Beyond the scope of this paper, several questions are left for future research. First, concerning minimal revision: What is the class of epistemic spaces on which the method is universal? What priors are appropriate for learning via minimal revision on arbitrary spaces, and possibly with wrong information? Second, beyond minimal revision, what priors are appropriate for other revision methods? Finally, we plan to consider appropriate priors from the viewpoint of doxastic attitudes, in the spirit of \cite{degremont2011finite}: What type of doxastic attitudes correspond to holding an appropriate prior?
\newpage
\section*{Acknowledgements}

Rineke Verbrugge acknowledges support from the project ``Hybrid Intelligence: Augmenting Human Intellect'', a 10-year Gravitation programme funded by the Dutch Ministry of Education, Culture and Science through the Netherlands Organisation for Scientific Research (grant number 024.004.022). 
Zoé Christoff acknowledges support from the project ``Democracy on Social Networks'' (VENI project number Vl.Veni.201F.032) financed by the Netherlands Organisation for Scientific Research (NWO). Edoardo Baccini acknowledges support from the Evert Willem Beth Foundation of the Royal Netherlands Academy of Arts and Sciences (KNAW) (grant number KNAW WF/743 – 13). Edoardo Baccini would also like to thank the ``Reasoning, Rationality and Science'' research group at the Ruhr-University Bochum for insightful discussion on an earlier version of this manuscript. All authors thank three anonymous referees for helpful comments.

\bibliographystyle{eptcs}
\bibliography{bibliography}
\newpage
\section*{Appendix}
\label{sec:Appendix}

\subsection*{Proof of Proposition~\ref{th:finite_id}}
\paragraph{Proposition~\ref{th:finite_id}.}\hspace{-0.2cm}\textit{An epistemic space $\textbf{S}$ is finitely identifiable if and only if there exists a definite finite tell-tale map for $\textbf{S}$.}
\vspace{-0.15cm}
\begin{proof}

    [$\Rightarrow$] Consider an epistemic space $\textbf{S}=\langle S, O\rangle$ and assume that it is finitely identifiable. Therefore there exists a learner $L$ that finitely identifies every state $s\in S$ on any sound and complete data stream for $s$. Consider an arbitrary state $s\in S$ and consider an arbitrary sound and complete data stream $\vec{O}$ for $s$. By Def.~\ref{def:fin_id}, it follows that there exists an $n\in\mathbb{N}$ such that $L(S, \vec{O}[n])=\{s\}$. Define $D_s:=set(\vec{O}[n])$. Assume towards a contradiction that $D_s$ is not a definite finite tell-tale set for $s$. It follows that there exists a $t\in S$ such that $t\not=s$ and $D_s\subseteq O_t$. Consider now a data stream for $\vec{O}_t$ for $t$ such that $\vec{O}_t[n]=\vec{O}[n]$. It follows that $L(\textbf{S}, \vec{O}_t[n])=\{s\}$. Thus, $L$ does not finitely identify $t$ contrary to our initial assumption. 
    
    [$\Leftarrow$] Consider an epistemic space $\textbf{S}=\langle S, O\rangle$ and assume there exists a definite finite tell-tale map for $\textbf{S}$. Let an enumeration $s_0, s_1, ..., s_i, ...$ of the elements of $S$ be given. Define $L$ in the following way: $L(\textbf{S}, \vec{O}[k])=\{s_i\}$, where $i\leq k$ is the smallest $i$ such that $D_{s_i}\subseteq set(O[k])$, if such $s_i$ exists and no guess has been made in the past; else, $L$ makes no guess. Consider an arbitrary state $s_j$ and a sound and complete data stream $\vec{O}$ for $s_j$. Since $\vec{O}$ is complete and $D_{s_j}$ is finite, there exists an $n\in \mathbb{N}$ such that $D_{s_j}\subseteq set(\vec{O}[n])$. Consider the smallest $n'\leq n$ for which $D_{s_j}\subseteq set(\vec{O}[n])$ and $j\leq n'$. Such an $n'$ exists. Consider indeed the least $n$ for which $D_{s_j}\subseteq set(\vec{O}[n])$: If $n\geq j$, then $n':=n$; If $n< j$, set $n':=j$ since $D_{s_j}\subseteq set(\vec{O}[n])\subseteq set(\vec{O}[j])$. We now need to show that $L$ has not made guesses for any $k<n'$. Assume towards a contradiction that there is $k< n'$, such that $L(\textbf{S},\vec{O}[k])=\{s_{j'}\}$. Since $n'$ is the smallest $n\in \mathbb{N}$ for which $D_{s_j}\subseteq \vec{O}[n]$ and $j\leq n'$, it follows that $D_{s_j}\not\subseteq set(\vec{O}[k])$. Thus $s_{j'}\not=s_j$. By definition of $L$, it follows that $D_{s_{j'}}\subseteq set(\vec{O})[k]\subseteq O_{s_j}$. But then $D_{s_{j'}}$ is not a definite finite tell-tale for $s_{j'}$, which contradicts our assumption. Since no previous guess has been made by $L$ before observing the sequence $\vec{O}[n']$, $L(\textbf{S}, \vec{O}[n'])=\{s_j\}$ (if $L(\textbf{S}, \vec{O}[n'])\not=\{s_j\}$, then there would be $s_i\not=s_j$ with $i<j$ and such that $D_{s_i}\subseteq set(\vec{O}[n'])\subseteq O_{s_j}$, and thus $D_{s_i}$ would not be a definite finite tell-tale set for $s_i$ contrary to our assumptions). By definition of $L$, no later guess will be output by $L$. Thus, $L$ finitely identifies $s_j$. Since $s_j$ was arbitrary, $L$ finitely identifies $S$. \end{proof}

\subsection*{\hypertarget{sec:lex_proof}{Proof of Theorem~\ref{cor:samecondlex}}}
\label{sec:cond_proof}

\paragraph{Theorem~\ref{cor:samecondlex}.}\hspace{-0.2cm}\textit{Let $\textbf{S}$ be an epistemic space and let $\preceq$ be a preorder over $\textbf{S}$. The following are equivalent: (i) There exists a conditioning tell-tale map for $\textbf{S}$ and $\preceq$; (ii) $\preceq$ is appropriate to learn $\textbf{S}$ via \textbf{cond}; (iii) $\preceq$ is appropriate to learn $\textbf{S}$ via \textbf{lex}.}

     We prove Theorem~\ref{cor:samecondlex} by proving the following two propositions: 

     \begin{proposition}Let $\preceq$ be a preorder on an epistemic space $\textbf{S}=\langle S, O\rangle$. The preorder $\preceq$ is appropriate to learn $\textbf{S}$ via \textbf{cond}
     if and only if there exists a generalised conditioning tell-tale map for $S$ and $\preceq$.
     \label{prop:cond}
\end{proposition}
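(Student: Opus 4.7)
The plan is to prove the biconditional by establishing each direction separately, starting with the easier $(\Leftarrow)$ implication. The underlying intuition is that after conditioning by any finite $F'_s$ with $F_s \subseteq F'_s \subseteq O_s$, the surviving set is exactly $\{w \in S : F'_s \subseteq O_w\}$, and the preorder on survivors is just $\preceq$ restricted; so one is essentially asking when $s$ is the unique $\preceq$-minimum of that set, and each clause of the tell-tale definition is designed to kill off a particular threat to unique minimality.

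For $(\Leftarrow)$, assume a generalised conditioning tell-tale map exists. Fix $s \in S$ and a sound and complete stream $\vec{O}$ for $s$. Since $F_s$ is finite (clause (i)) and $\vec{O}$ is complete, there is $n$ with $F_s \subseteq set(\vec{O}[n])$. For every $k \geq n$, put $F'_s := set(\vec{O}[k])$; soundness yields $F_s \subseteq F'_s \subseteq O_s$. The survivors of $\textbf{cond}$-iteration are precisely $\{w : F'_s \subseteq O_w\}$, and $s$ is among them by (ii). For any other survivor $w$, I argue it is non-minimal: if $w \sim s$ then (iii) applied to $F_s$ forces $s \prec w$; if $w \not\sim s$ then (iv) applied to $F'_s$ provides some $v \prec w$ with $F'_s \subseteq O_v$, so $v$ is a surviving witness below $w$. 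Hence $\min_\preceq \{w : F'_s \subseteq O_w\} = \{s\}$, and convergence follows.

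For $(\Rightarrow)$, assume $\preceq$ is appropriate. For each $s$, take an enumeration $\vec{O}^s$ of $O_s$; by appropriateness there is a least $n_s$ with $\{s\} = \min_\preceq\{w : set(\vec{O}^s[n_s]) \subseteq O_w\}$, and I set $F_s := set(\vec{O}^s[n_s])$. Clauses (i) and (ii) are immediate. For (iii), any $t \sim s$ with $F_s \subseteq O_t$ and $t \neq s$ survives; a short case analysis on $t \preceq s$ versus $s \preceq t$ (using that $s$ is minimal, and that $t \simeq s$ would contradict uniqueness of the minimum) yields $s \prec t$. The crux is clause (iv): given $F'_s \supseteq F_s$ finite, $F'_s \subseteq O_s$, and $t \not\sim s$ with $F'_s \subseteq O_t$, I would build a sound and complete stream $\vec{O}'$ for $s$ that enumerates $F'_s$ first, then elements of $O_s \cap O_t$, and only afterwards elements of $O_s \setminus O_t$; at the convergence step $n'$ along this stream, if $t$ is still alive, then unique minimality of $s$ forces some $v \prec t$ among the survivors, and this $v$ satisfies $F'_s \subseteq O_v$.

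The main obstacle is the subcase in which $t$ must die before $n'$, which can arise when $O_s \not\subseteq O_t$. My intended tactic is a contradiction argument: if no $v \prec t$ with $F'_s \subseteq O_v$ exists, I would build a bad stream by repeatedly enumerating $F'_s$ together with observables in $O_s \cap O_t$ so that the minimal incomparable competitor of $s$ inside the $F'_s$-survivors keeps reappearing as a minimum, obstructing convergence and contradicting appropriateness. Formalising this stream-surgery carefully — possibly by replacing $F_s$ with a slightly larger finite set closed under the witnesses required at each extension — is where I expect the real work to lie; the other directions and clauses are essentially bookkeeping on top of the fact that conditioning preserves the ambient preorder on survivors.
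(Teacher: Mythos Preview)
Your $(\Leftarrow)$ direction and your treatment of clauses (i)--(iii) in the $(\Rightarrow)$ direction are correct and essentially match the paper. The genuine gap is in clause (iv): the set $F_s$ you choose --- the first stage along one fixed enumeration at which $\{s\}$ becomes the unique minimum --- need \emph{not} satisfy (iv). Concretely, take $S=\{s,t\}\cup\{v_i:i\in\mathbb{N}\}$ with $s$ incomparable to all other worlds and $t\succ v_0\succ v_1\succ\cdots$; set $O_s=\{p_j:j\in\mathbb{N}\}$, $O_t=\{p_0,p_2,r\}$, and $O_{v_i}=\{p_0,q_i\}$. This $\preceq$ is appropriate for \textbf{cond} (for $s$: once $p_1$ appears every competitor dies; for $t$ and each $v_i$: their private observable isolates them). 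On the enumeration $p_0,p_1,p_2,\ldots$ the output is $\{s\}$ already at step $0$ --- the chain $t\succ v_0\succ v_1\succ\cdots$ has no minimum --- and it stays $\{s\}$ at every later step, so your procedure returns $F_s=\emptyset$. But (iv) fails for $F'_s=\{p_0,p_2\}$ and the world $t$: no $v\prec t$ satisfies $\{p_0,p_2\}\subseteq O_v$. Your ``intended tactic'' of building a bad stream in which the competitor ``keeps reappearing as a minimum'' cannot work for conditioning, where eliminated worlds never return; and a stream that avoids $O_s\setminus O_t$ is not complete, so appropriateness gives you nothing there.

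The paper closes this gap by invoking the Blum--Blum locking-sequence lemma (its Lemma~1): since $L^{\textbf{PLAUS}}_{\textbf{cond}}$ identifies $s$ in the limit, there is a finite sound sequence $\sigma_s$ such that the output is $\{s\}$ after $\sigma_s*\tau$ for \emph{every} sound $\tau$, not merely along one enumeration. Setting $F_s:=set(\sigma_s)$ makes (iv) immediate: given finite $F'_s$ with $F_s\subseteq F'_s\subseteq O_s$ and $t\not\sim s$ with $F'_s\subseteq O_t$, feed $\sigma_s$ followed by an enumeration of $F'_s$; the locking property forces the unique minimum of the survivors to be $\{s\}$, so the surviving $t$ must have some surviving $v\prec t$, and that $v$ automatically satisfies $F'_s\subseteq O_v$. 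Your parenthetical about ``replacing $F_s$ with a slightly larger finite set closed under the witnesses required at each extension'' is groping toward exactly this, but the existence of a single finite set that works uniformly for all extensions is precisely the nontrivial content of the locking-sequence lemma, and you have not supplied an argument for it.
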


\begin{proposition}Let $\preceq$ be a preorder on an epistemic space $\textbf{S}=\langle S, O\rangle$. The preorder $\preceq$ is appropriate to learn $\textbf{S}$ via \textbf{lex}
     if and only if there exists a generalised conditioning tell-tale map for $S$ and $\preceq$.
     \label{prop:lex}
\end{proposition}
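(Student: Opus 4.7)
The plan is to reduce Proposition~\ref{prop:lex} to Proposition~\ref{prop:cond} by establishing that on sound prefixes, the learning functions induced by \textbf{cond} and \textbf{lex} coincide. The core technical step is the following identity, which I would prove by induction on $k$: for any sound data sequence $\vec{O}[k]$ for a state $s$,
\[
\min_{\preceq^{\textbf{lex}}_{\vec{O}[k]}}(S) \;=\; \min_{\preceq}\bigl(\{u \in S : set(\vec{O}[k]) \subseteq O_u\}\bigr).
\]
The base case $k=0$ is trivial. For the inductive step, let $p_{k+1} \in O_s$ be the new observation; by Definition~\ref{def:lex_update}, the updated preorder places $p_{k+1}$-worlds strictly below non-$p_{k+1}$-worlds while preserving $\preceq^{\textbf{lex}}_{\vec{O}[k]}$ within each of the two classes. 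Since $s \in p_{k+1}$ by soundness, the $p_{k+1}$-class is non-empty, so $\min_{\preceq^{\textbf{lex}}_{\vec{O}[k+1]}}(S) = \min_{\preceq^{\textbf{lex}}_{\vec{O}[k]}}(S \cap p_{k+1})$; applying a strengthened inductive hypothesis (stated uniformly for arbitrary $X \subseteq S$ meeting the ``cond-survivor'' set at stage $k$) then yields the identity.

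Once the identity is in hand, the rest is immediate. For every sound and complete stream $\vec{O}$ for $s$ and every $k \in \mathbb{N}$, $L^{\textbf{PLAUS}}_{\textbf{lex}}(\textbf{S}, \vec{O}[k])$ equals $L^{\textbf{PLAUS}}_{\textbf{cond}}(\textbf{S}, \vec{O}[k])$, so the two learners identify exactly the same states on exactly the same streams. Hence $\preceq$ is appropriate for \textbf{lex} if and only if it is appropriate for \textbf{cond}, and invoking Proposition~\ref{prop:cond} concludes the argument: appropriateness for \textbf{cond} is equivalent to the existence of a generalised conditioning tell-tale map for $\textbf{S}$ and $\preceq$.

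The main subtlety I anticipate lies in formulating the inductive statement strongly enough. Rather than proving the identity only for $X = S$, I would establish it for every $X \subseteq S$ whose intersection with the cond-survivor set at stage $k$ is non-empty, since the lex step reduces the computation of the minimum on $X$ at stage $k+1$ to that on $X \cap p_{k+1}$ at stage $k$. The chief care point is propagating the non-emptiness precondition across the induction, which works precisely because every observation in a sound stream for $s$ contains $s$.
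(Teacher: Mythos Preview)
Your reduction is correct and is a genuinely different (and cleaner) route than the paper's. The paper proves Proposition~\ref{prop:lex} directly, via two lemmas that mirror the two directions: one constructs a generalised conditioning tell-tale set from a locking sequence for $L^{\textbf{PLAUS}}_{\textbf{lex}}$, and the other verifies that the learner stabilises on $\{s\}$ once the tell-tale has been observed. To support these arguments the paper first establishes four auxiliary facts about \textbf{lex} (persistence of comparability, persistence of incomparability on survivors, persistence of strictness along sound streams, and preservation of minimality of $s$), and then weaves them into case analyses that closely parallel the proof of Proposition~\ref{prop:cond}.

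Your approach bypasses all of this by showing that the two learners agree pointwise on sound prefixes. The strengthened inductive statement you propose---for every $X$ meeting the stage-$k$ survivor set, $\min_{\preceq^{\textbf{lex}}_{\vec{O}[k]}}(X)=\min_{\preceq}(X\cap\text{survivors}_k)$---does go through exactly as you sketch: after a \textbf{lex} step with $p_{k+1}$, minimality of an $X$-element reduces to minimality within $X\cap p_{k+1}$ under the previous order (the only care point, which you identify, is that $X\cap p_{k+1}$ still meets the survivor set, and soundness guarantees this via $s$). Instantiating $X=S$ yields $L^{\textbf{PLAUS}}_{\textbf{lex}}(\textbf{S},\vec{O}[k])=L^{\textbf{PLAUS}}_{\textbf{cond}}(\textbf{S},\vec{O}[k])$ for every prefix of every sound and complete stream, and Proposition~\ref{prop:cond} then finishes the argument. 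What this buys you is brevity and a direct proof of the equivalence $(ii)\Leftrightarrow(iii)$ of Theorem~\ref{cor:samecondlex} as a free by-product; what the paper's approach buys is a stock of reusable structural facts about \textbf{lex} (its Facts~1--4), some of which are invoked elsewhere, e.g., in the fair-streams discussion.
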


Proposition~\ref{prop:cond} proves that $(i)$ and $(ii)$ in Theorem~\ref{cor:samecondlex} are equivalent. Proposition~\ref{prop:lex} proves that $(i)$ and $(iii)$ in Theorem~\ref{cor:samecondlex} are equivalent.

To prove both propositions, we rely on the following definition and lemma.

     \begin{definition}[Locking sequence \textrm{\cite[Def.~$22$]{baltag2019truth}}, originally in \cite{BLUM1975125}]
Let an epistemic space $\textbf{S}=\langle S,O\rangle$, a possible world $s\in S$, a learning method $L$, and a data sequence $\sigma$ be given. Sequence $\sigma$ is called a locking sequence for $s$ if $\sigma$ is sound with respect to $s$ and for each data sequence $\tau$ with $s\in \bigcap set(\tau)$, $L(\textbf{S}, \sigma*\tau)=L(\textbf{S}, \sigma)=\{s\}$.
    \label{def:belief_revision}
\end{definition}

\begin{lemma}[\textrm{\cite[Lemma~$1$]{baltag2019truth}}] If learning method $L$ identifies possible world $s$ in the limit, then there exists a locking sequence for $s$ and $L$.
    \label{lemma:lockingsequence}
\end{lemma}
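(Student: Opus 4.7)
The plan is to prove the lemma by contraposition: assuming that no locking sequence exists for $s$, I will construct a sound and complete data stream $\vec{O}$ for $s$ on which $L$ infinitely often outputs a conjecture different from $\{s\}$, contradicting identification in the limit. First, I would unpack the failure of the locking-sequence property: for every data sequence $\sigma$ sound with respect to $s$, one of two things happens — either $L(\textbf{S},\sigma)\neq\{s\}$, or there exists an extension $\tau$ with $s\in\bigcap set(\tau)$ such that $L(\textbf{S},\sigma\ast\tau)\neq\{s\}$. In both cases, from $\sigma$ we can reach a sound, finite extension on which $L$ fails to output the correct singleton conjecture.

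Next, I would fix an enumeration $o_0,o_1,o_2,\ldots$ of $O_s$ (which is at most countable by the ambient definition of an epistemic space) and build the stream stage by stage. Set $\sigma_0=\epsilon$, which is vacuously sound for $s$. At stage $i+1$, given a sound prefix $\sigma_i$: if $L(\textbf{S},\sigma_i)\neq\{s\}$ already, let $\rho_i=o_i$; otherwise $L(\textbf{S},\sigma_i)=\{s\}$, and since $\sigma_i$ is not a locking sequence there is some $\tau_i$ with $s\in\bigcap set(\tau_i)$ and $L(\textbf{S},\sigma_i\ast\tau_i)\neq\{s\}$, in which case I let $\rho_i=\tau_i\ast o_i$. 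Set $\sigma_{i+1}=\sigma_i\ast\rho_i$. By construction every extension only appends observables true at $s$, so soundness is preserved, and the forced insertion of $o_i$ at stage $i$ guarantees that the limit stream $\vec{O}:=\lim_i \sigma_i$ is both sound and complete with respect to $s$.

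Finally, I would read off the contradiction. By the stage-wise choice, for each $i$ there is a position $k_i$ in $\vec{O}[|\sigma_{i+1}|]$ — namely either the position $|\sigma_i|$ itself or the position $|\sigma_i|+|\tau_i|$ — at which $L(\textbf{S},\vec{O}[k_i])\neq\{s\}$. Since the $k_i$ are strictly increasing, $L$ fails to output $\{s\}$ at infinitely many initial segments of $\vec{O}$. Thus no threshold $n$ can exist such that $L(\textbf{S},\vec{O}[k])=\{s\}$ for all $k\ge n$, contradicting the assumption that $L$ identifies $s$ in the limit. Hence a locking sequence must exist.

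The main obstacle is the bookkeeping in the construction: I need to simultaneously force completeness (by enumerating $O_s$) and to repeatedly sabotage $L$ using the extensions guaranteed by the failure of the locking property, while never violating soundness. The key subtlety is that after inserting the sabotaging block $\tau_i$, the prefix has changed, so at the next stage I must reapply the "no locking sequence" assumption to the new sound prefix $\sigma_{i+1}$; this is exactly where the universal quantification over all sound $\sigma$ in the definition is used, and it is what makes the iterative construction go through.
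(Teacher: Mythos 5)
Your proof is correct: the case split on the failure of the locking property, the stage-wise interleaving of an enumeration of $O_s$ with sabotaging extensions $\tau_i$, and the observation that the resulting stream is sound and complete yet makes $L$ deviate from $\{s\}$ at infinitely many prefixes is exactly the classical Blum--Blum locking-sequence argument. The paper itself gives no proof of this lemma --- it imports it verbatim from the cited reference --- and your argument is the standard one used there, with the key bookkeeping points (preservation of soundness, re-applying the no-locking-sequence assumption to each new prefix, strict monotonicity of the failure positions $k_i$) all handled properly.
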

     
     \subsubsection*{Proof of Proposition~\ref{prop:cond}}
\paragraph{Proposition~\ref{prop:cond}.}\hspace{-0.2cm}\textit{Let $\preceq$ be a preorder on an epistemic space $\textbf{S}=\langle S, O\rangle$. The preorder $\preceq$ is appropriate to learn $\textbf{S}$ via \textbf{cond}
     if and only if there exists a generalised conditioning tell-tale map for $S$ and $\preceq$.}
\begin{proof}
   
[$\Rightarrow$]  Let a preorder $\preceq$ and an epistemic space $\textbf{S}$ be given. Assume that the preorder $\preceq$ is appropriate to learn $\textbf{S}$ via $\preceq$ and \textbf{cond}. Let \textbf{PLAUS} be the plausibility assignment assigning the preorder $\preceq$ to $\textbf{S}$. By Def.~\ref{def:appropriateness}, the learner $L^{\textbf{PLAUS}}_{\textbf{cond}}$ identifies the space $S$ in the limit. 
     By Lemma~\ref{lemma:lockingsequence}, for each state $s\in S$, there exists a locking sequence $\sigma_s$ for $L^{\textbf{PLAUS}}_{\textbf{cond}}$ and $s$.
     
     Consider an arbitrary $s$ and define $F_s:=set(\sigma_s)$. 
      $F_s$ is finite, since $\sigma_s$ is finite, and thus satisfies point $(i)$ in Def.~\ref{def:gen_cond_telltale}. Furthermore, $F_s\subseteq O_s$ since $set(\sigma_s)\subseteq O_s$, and thus satisfies point $(ii)$ in Def.~\ref{def:gen_cond_telltale}.

         We now prove that $F_s$ satisfies $(iii)$ of Def.~\ref{def:gen_cond_telltale}. Assume for contradiction that there is a $t\in S$ such that
         $t\sim s$, 
         $F_s\subseteq O_t$ and $t\not=s$, and $s\not\prec t$. Then, $t\simeq s$, or $t\prec s$. 
         [Case: $t\prec s$] Consider now a sound and complete data stream $\vec{O}$ such that there exists $n$ such that $\vec{O}[n]=\sigma_s$. By Def.~\ref{def:conditioning}, and the fact that $F_s\subseteq O_t$, $t$ is in the updated space after $\vec{O}[n]$, just like $s$. Consider the preorder $\preceq^{\textbf{cond}}_{\vec{O}[n]}$, i.e., the preorder in the updated space $\textbf{cond}(\textbf{PLAUS}(\textbf{S}), \vec{O}[n])$. Since $t\prec s$, we have that $t\prec^{\textbf{cond}}_{\vec{O}[n]}s$. Thus $s\not\in L^{\textbf{PLAUS}}_{\textbf{cond}}(\textbf{S},\vec{O}[n])$, since $s\not\in min_{\preceq^{\textbf{cond}}_{{\vec{O}[n]}}}$. But by assumption, $\sigma_s$ is a locking sequence for $s$ and $L^{\textbf{PLAUS}}_{\textbf{cond}}$, and thus $L^{\textbf{PLAUS}}_{\textbf{cond}}(\textbf{S},\vec{O}[n])=\{s\}$. 
         [Case: $t\simeq s$] Consider now a sound and complete data stream $\vec{O}$ such that there exists $n$ such that $\vec{O}[n]=\sigma_s$. By Def.~\ref{def:conditioning}, and the fact that $F_s\subseteq O_t$, $t$ is in the updated space after $\vec{O}[n]$, just like $s$. 
         Since $t\simeq s$, we have that $t\simeq^{\textbf{cond}}_{\vec{O}[n]}s$. 
         By assumption, $\sigma_s$ is a locking sequence for $s$ and $L^{\textbf{PLAUS}}_{\textbf{cond}}$, and thus $L^{\textbf{PLAUS}}_{\textbf{cond}}(\textbf{S},\vec{O}[n])=\{s\}$, and thus $s\in min_{\preceq^{\textbf{cond}}_{\vec{O}[n]}}$. But since $t\simeq^\textbf{cond}_{\vec{O}[n]}s$, then $t\in min_{\preceq^\textbf{cond}_{\vec{O}[n]}}$. Hence $L^{\textbf{PLAUS}}_{\textbf{cond}}(\textbf{S},\vec{O}[n])\not=\{s\}$.

        We now prove that $F_s$ satisfies $(iv)$ of Def.~\ref{def:gen_cond_telltale}. Assume by contradiction that there exists a set $F'_s\supseteq F_s\subseteq O_s$ such that there exists a $t\in S$ such that $t\not\sim s$, $F'_s\subseteq O_t$, and there is no $v$ such that $v\prec r$ and $F'_s\subseteq O_v$. Consider now a sound and complete data-stream $\vec{O}$ such that for some $n\in N$, $\vec{O}[n]=\sigma_s$, and for some $k\geq n$, $set(\vec{O}[k])=F'_s$. Since $\vec{O}[n]$ is a locking sequence for $s$, $L^{\textbf{PLAUS}}_{\textbf{cond}}(\textbf{S},\vec{O}[n])=\{s\}$ and for all sequences $\tau$ for $s$, $L^{\textbf{PLAUS}}_{\textbf{cond}}(\textbf{S},\vec{O}[n]*\tau)=\{s\}$. Hence also $L^{\textbf{PLAUS}}_{\textbf{cond}}(\textbf{S},\vec{O}[k])=\{s\}$.
         By assumption, we know that there is a $t\in S$ such that $t\not\sim s$ and $set(\vec{O}[k])\subseteq O_t$. Therefore, such a $t$ is in the updated model $\textbf{cond}(\textbf{PLAUS}(\textbf{S}), \vec{O}[k])$, and moreover $t\not\sim^{\textbf{cond}}_{\vec{O}[k]} s$. By assumption, for such a $t$, there is no $v$ such that $v\prec r$ and $F'_s\subseteq O_v$. 
         Hence, for all $v \in S$, either $v\not\prec t$ or $F'_s\not\subseteq O_v$. Consider such a $v$. If $F'_s\not\subseteq O_v$, then $v$ has been eliminated. Therefore, for all $v\sim t$, either $v$ has been eliminated or $t$ is at least as good as $v$ in the updated model, i.e., $t\preceq^{\textbf{cond}}_{\vec{O}[k]}v$. Therefore, if $v\sim^{\textbf{cond}}_{\vec{O}[k]} t$, then $t\preceq^{\textbf{cond}}_{{\vec{O}[k]}} v$. Thus, in the model updated on the sequence $\vec{O}[k]$, we have $t\in min_{\preceq^{\textbf{cond}}_{\vec{O}[k]}}$, and thus $t\in L^{\textbf{PLAUS}}_{\textbf{cond}}(\textbf{S},\vec{O}[k])$. Contradiction.

[$\Leftarrow$]  Let a preorder $\preceq$ and an epistemic space $\textbf{S}$ be given such that for each $s$ there exists a set $F_s$ as in Def.~\ref{def:gen_cond_telltale}. Let the plausibility assignment \textbf{PLAUS} be given that assigns the preorder $\preceq$ to the space $\textbf{S}$.
     Consider now an arbitrary world $s$, and a sound and complete data stream $\vec{O}$ for $s$. Since $F_s$ is finite, there exists an $n\in\mathbb{N}$ such that $F_s\subseteq set(\vec{O}[n])$. 
     We now show that $L^{\textbf{PLAUS}}_{\textbf{cond}}$ identifies $s$ in the limit, by showing that $s$ is the unique minimal element in $\textbf{cond}(\textbf{PLAUS}(\textbf{S}),\vec{O}[n])$, and that it is the unique minimal element for any $k>n$.
    Consider $t\in S$ distinct from $s$ in the model obtained after updating on $\vec{O}[n]$. By definition of \textbf{cond}, it must be the case that $set(\vec{O}[n])\subseteq O_t$ and thus $F_s\subseteq O_t$.    Consider the preorder $\preceq^{\textbf{cond}}_{\vec{O}[n]}$ in the updated space $\textbf{cond}(\textbf{PLAUS}(\textbf{S}),\vec{O}[n])$. We consider two cases.
    
    [Case 1: $t\sim^{\textbf{cond}}_{\vec{O}[n]} s$] It follows that $t\sim s$. Since $F_s\subseteq set(O[n])$, and since $set(O[n]) \subseteq O_t$ (since otherwise $t$ would have been eliminated), $F_s\subseteq O_t$. By Def.~\ref{def:gen_cond_telltale} it follows that $s\prec t$, and thus $s\prec^{\textbf{cond}}_{\vec{O}[n]} t$. Since $t$ is arbitrary, it follows that for all $t\sim^{\textbf{cond}}_{\vec{O}[n]} s$, $s\prec^{\textbf{cond}}_{\vec{O}[n]}t$, and thus $s\in min_{\preceq^{\textbf{cond}}_{\vec{O}[n]}}$.

        [Case 2: $t\not\sim^{\textbf{cond}}_{\vec{O}[n]} s$] Consider a $t\not\sim^{\textbf{cond}}_{\vec{O}[n]} s$. Since $t\not\sim^{\textbf{cond}}_{\vec{O}[n]} s$, and since $set(\vec{O}[n])\subseteq O_t$, we have that $F_s \subseteq set(\vec{O}[n])$. By Def.~\ref{def:gen_cond_telltale} there exists a $v\in S$ such that $v\prec t$ and $set(\vec{O}[n])\subseteq O_v$. It follows that $v\prec^{\textbf{cond}}_{\vec{O}[n]} t$, and thus $t\not\in min_{\preceq^{\textbf{cond}}_{\vec{O}[n]}}$. Since $t$ was arbitrary, there is no $\in S$ such that $t\not\sim^{\textbf{cond}}_{\vec{O}[n]}s$ and $t\in min_{\preceq^{\textbf{cond}}_{\vec{O}[n]}}$.
     
     It follows that, $min_{\preceq^{\textbf{cond}}_{\vec{O}[n]}}\textbf{cond}(\textbf{PLAUS}(\textbf{S}),\vec{O}[n])=\{s\}$. Thus $L^\textbf{PLAUS}_{\textbf{cond}}(\textbf{S}, \vec{O}[n])=\{s\}$.

     We now show that no new minimal element is created. Consider an arbitrary $k>n$, and assume by contradiction that some $t\not=s$ is a minimal element after updating on the sequence $\vec{O}[k]$. Note that $s\in min_{\preceq^{\textbf{cond}}_{\vec{O}[k]}}$, since $\vec{O}$ is sound with respect to $s$, and \textbf{cond} does not change the relative order of the worlds. Also, $F_s\subseteq set(\vec{O}[n])\subseteq set(\vec{O}[k])$. Since $set(\vec{O}[k])\subseteq O_t$, we have $F_s
     \subseteq O_t$. Either $t\sim^{\textbf{cond}}_{\vec{O}[k]} s$ or $t\not\sim^{\textbf{cond}}_{\vec{O}[k]} s$. In the first case, $t\sim s$. Thus, by assumption, $s\prec t$, and by definition of conditioning $s\prec^{\textbf{cond}}_{\vec{O}[k]} t$, which contradicts the assumption that $t$ is minimal. 
     Therefore, it must be the case that $t\not\sim^{\textbf{cond}}_{\vec{O}[k]} s$. This implies that $t\not\sim s$. But since $F_s\subseteq set(\vec{O}[k])$, there exists a $v\in S$ s.t. $v\prec t$ and $set(\vec{O}[k])\subseteq O_v$. It follows that, $v\prec^{\textbf{cond}}_{\vec{O}[k]} t$ and therefore $t$ is not a minimal element of $\textbf{cond}(\textbf{PLAUS}(\textbf{S}),\vec{O}[k])$.
     Contradiction. 
\end{proof}

\subsubsection*{\hypertarget{sec:lex_proof}{Proof of Proposition~\ref{prop:lex}}}
\label{sec:lex_proof}
\paragraph{Proposition~\ref{prop:lex}.}\hspace{-0.2cm}\textit{Let $\preceq$ be a preorder on an epistemic space $\textbf{S}=\langle S, O\rangle$. The preorder $\preceq$ is appropriate to learn $\textbf{S}$ via \textbf{lex}
     if and only if there exists a general conditioning tell-tale map for $S$ and $\preceq$.}

To prove the proposition above we need a number of facts about \textbf{lex}.

\begin{fact}
    Let $\preceq$ be a  preorder on an epistemic space $\textbf{S}=\langle S, O\rangle$. If $s\sim t$, then $s\sim^\textbf{lex}_{p} t$, for any $p\in O$.
    \label{fact:pers_comparability}
\end{fact}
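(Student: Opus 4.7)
The plan is to prove Fact 1 by a direct case analysis, unpacking the definition of $\sim$ and the three clauses in Definition~\ref{def:lex_update}. Since $s\sim t$ means $s\preceq t$ or $t\preceq s$, the goal is to exhibit in each case one of the two inequalities $s\preceq^{\textbf{lex}}_p t$ or $t\preceq^{\textbf{lex}}_p s$. Without loss of generality I would assume $s\preceq t$ (the other direction is symmetric by swapping the roles of $s$ and $t$).

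Now I would split according to how $s$ and $t$ sit with respect to $p$. If both $s,t\in p$, then $s\preceq t$ together with $s,t\in p$ gives $s\preceq_p t$, so the first clause of Definition~\ref{def:lex_update} yields $s\preceq^{\textbf{lex}}_p t$. If both $s,t\notin p$, then $s\preceq_{\bar p} t$ holds, so the second clause gives $s\preceq^{\textbf{lex}}_p t$. If $s\in p$ and $t\notin p$, the third clause applies directly and yields $s\preceq^{\textbf{lex}}_p t$. Finally, if $s\notin p$ and $t\in p$, the third clause applied to the pair $(t,s)$ gives $t\preceq^{\textbf{lex}}_p s$. In every case at least one of $s\preceq^{\textbf{lex}}_p t$ or $t\preceq^{\textbf{lex}}_p s$ holds, which is exactly $s\sim^{\textbf{lex}}_p t$.

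There is no real obstacle here: the proof is a routine verification that lexicographic upgrade never introduces incomparabilities between previously comparable worlds, and the content is entirely in matching each case to the correct clause of Definition~\ref{def:lex_update}. The only thing to be mindful of is to preserve symmetry by explicitly noting the case $t\preceq s$ reduces to the one above by interchanging $s$ and $t$, so no case is forgotten.
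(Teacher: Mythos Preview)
Your proposal is correct and follows essentially the same approach as the paper's own proof: assume $s\preceq t$ without loss of generality and do a four-way case split on membership of $s,t$ in $p$, invoking the appropriate clause of Definition~\ref{def:lex_update} in each case. In fact your treatment of the case $s\notin p$, $t\in p$ is stated more carefully than the paper's (which concludes $s\preceq^{\textbf{lex}}_p t$ there, whereas the correct conclusion from the third clause is $t\preceq^{\textbf{lex}}_p s$, as you write).
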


\begin{proof}
    Assume that $s\sim t$. Then $s\preceq t$ or $t\preceq s$. Without loss of generality, assume that $s\preceq t$. There are four cases. 
    [Case 1: $s\in p$ and $t\in p$] Since $s\in p$, $t\in p$ and $s\preceq t$ by assumption, we have that $s\preceq^\textbf{lex}_{p} t$ by Def.~\ref{def:lex_update}. Hence $s\sim^\textbf{lex}_{p} t$.
    [Case 2: $s\not\in p$ and $t\not\in p$] Since $s\not\in p$, $t\not\in p$ and $s\preceq t$ by assumption, we have that $s\preceq^\textbf{lex}_{p} t$ by Def.~\ref{def:lex_update}. Hence $s\sim^\textbf{lex}_{p} t$. 
    [Case 3: $s\in p$ and $t\not\in p$] If $s\in p$ and $t\not\in p$, then $s\preceq^\textbf{lex}_{p} t$ by Def.~\ref{def:lex_update}. Hence, $s\sim^\textbf{lex}_{p} t$. 
    [Case 4: $s\not\in p$ and $t\in p$] If $s\not\in p$ and $t\in p$, then $s\preceq^\textbf{lex}_{p} t$ by Def.~\ref{def:lex_update}. Hence, $s\sim^\textbf{lex}_{p} t$. 
\end{proof}

\begin{fact}
    Let $\preceq$ be a  preorder on an epistemic space $\textbf{S}=\langle S, O\rangle$. Consider $s,t\in S$ such that $s\not\sim t$, and consider a sound data stream $\vec{O}$ for $s$. Then, for any $n\in \mathbb{N}$, if $set(\vec{O}[n])\subseteq O_t$, then $s\not\sim^\textbf{lex}_{\vec{O}[n]} t$.
    \label{fact:persistence_incomp}
\end{fact}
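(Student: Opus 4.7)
The plan is to prove this fact by straightforward induction on $n$, using the definition of \textbf{lex} (Definition~\ref{def:lex_update}) to show that incomparability between $s$ and $t$ is preserved at each step, provided every observable so far is compatible with both states.

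For the base case $n = 0$, the empty sequence yields $\preceq^{\textbf{lex}}_{\vec{O}[0]} = \preceq$, so $s \not\sim^{\textbf{lex}}_{\vec{O}[0]} t$ follows directly from the assumption $s \not\sim t$. For the inductive step, suppose $s \not\sim^{\textbf{lex}}_{\vec{O}[n]} t$ and let $p$ be the observable occurring in position $n+1$ of $\vec{O}$. Since $\vec{O}$ is sound for $s$, we have $s \in p$; since $\mathit{set}(\vec{O}[n+1]) \subseteq O_t$, we also have $t \in p$. The key observation is then that, by Definition~\ref{def:lex_update}, when both $s$ and $t$ lie in $p$, the third clause ($s \in p$ and $t \notin p$) cannot apply, and the second clause (both in $\bar p$) cannot apply either, so only the first clause is relevant: $s \preceq^{\textbf{lex}}_{\vec{O}[n+1]} t$ holds iff $s \preceq^{\textbf{lex}}_{\vec{O}[n]} t$, and symmetrically for the reverse direction. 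Hence $s \sim^{\textbf{lex}}_{\vec{O}[n+1]} t$ iff $s \sim^{\textbf{lex}}_{\vec{O}[n]} t$, and the inductive hypothesis gives $s \not\sim^{\textbf{lex}}_{\vec{O}[n+1]} t$.

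There is no real obstacle here; the only thing one must be careful about is correctly identifying which of the three disjuncts in the definition of $\textbf{lex}$ can be triggered when both worlds fall on the same side of the observable $p$, and to invoke soundness of $\vec{O}$ (to place $s \in p$) together with the assumption $\mathit{set}(\vec{O}[n+1]) \subseteq O_t$ (to place $t \in p$) in order to rule out the asymmetric clause of the definition.
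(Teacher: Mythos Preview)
Your proposal is correct and follows essentially the same route as the paper: an induction on $n$, with the base case immediate and the inductive step using soundness (to get $s\in p$) together with $set(\vec{O}[n+1])\subseteq O_t$ (to get $t\in p$), then reading off from Definition~\ref{def:lex_update} that when both worlds satisfy $p$ the relation between them is unchanged. The only cosmetic tightening would be to state the inductive hypothesis as the full conditional and note explicitly that $set(\vec{O}[n+1])\subseteq O_t$ implies $set(\vec{O}[n])\subseteq O_t$, so the hypothesis applies; the paper's proof elides the same point.
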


\begin{proof}
    By induction on $n$.   
    [Base Case: $n=0$] Given that $\vec{O}[0]$ is empty, the updated space is the same as the initial space, thus $t\not\sim s$. 
    [Inductive Step] Assume that for all $n< m$, $t\not\sim^\textbf{lex}_{\vec{O}[n]} s$ in the corresponding update $\textbf{lex}(\textbf{PLAUS}(\textbf{S}), \vec{O}[n])$. Therefore, $t\not\sim^\textbf{lex}_{\vec{O}[m-1]}s$. Hence, $t\not\preceq^\textbf{lex}_{\vec{O}[m-1]}s$ and $s\not\preceq^\textbf{lex}_{\vec{O}[m-1]}t$. Assume $set(\vec{O}[m])\subseteq O_t$, then $t,s\in p$, where $p\in O_s$ is the proposition occurring in $\vec{O}$ after the sequence $\vec{O}[m-1]$. By Def.~\ref{def:lex_update}, since $s,t\in p$, we have that $s\preceq^\textbf{lex}_{\vec{O}[m]} t$ if and only if $s\preceq^\textbf{lex}_{\vec{O}[m-1]} t$, and $t\preceq^\textbf{lex}_{\vec{O}[m]} s$ if and only if $t\preceq^\textbf{lex}_{\vec{O}[m-1]} s$. Using the inductive hypothesis, we then conclude that $s\not\preceq^\textbf{lex}_{\vec{O}[m]} t$ and $t\not\preceq^\textbf{lex}_{\vec{O}[m]} s$. Hence, $s\not\sim^\textbf{lex}_{\vec{O}[m]} t$.
\end{proof}

\begin{fact}
    Let $\preceq$ be a  preorder on an epistemic space $\textbf{S}=\langle S, O\rangle$. Consider $s,t\in S$ such that $s\prec t$, and consider a sound data stream $\vec{O}$ for $s$. We have that for any $n\in \mathbb{N}$, $s\prec^\textbf{lex}_{\vec{O}[n]} t$.
    \label{fact:persistence_strict}
\end{fact}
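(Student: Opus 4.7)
The plan is to prove the fact by induction on $n$, mirroring the structure of the preceding Facts~\ref{fact:pers_comparability} and~\ref{fact:persistence_incomp}. The base case $n=0$ is immediate since $\vec{O}[0]$ is empty and so the updated space coincides with the initial one, where $s \prec t$ holds by assumption. For the inductive step, I assume $s \prec^{\textbf{lex}}_{\vec{O}[m-1]} t$ and want to show $s \prec^{\textbf{lex}}_{\vec{O}[m]} t$. Let $p$ be the observable occurring in position $m$ of $\vec{O}$; soundness of $\vec{O}$ for $s$ gives $s \in p$. I then split on whether $t \in p$.

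If $t \in p$, both $s$ and $t$ lie in $p$, so according to Def.~\ref{def:lex_update} only the first disjunct (namely $s \preceq_p t$) is relevant to deciding the order between $s$ and $t$: the second disjunct would require $s,t \notin p$ and the third would require $t \notin p$. It follows that $s \preceq^{\textbf{lex}}_{\vec{O}[m]} t$ iff $s \preceq^{\textbf{lex}}_{\vec{O}[m-1]} t$, and symmetrically $t \preceq^{\textbf{lex}}_{\vec{O}[m]} s$ iff $t \preceq^{\textbf{lex}}_{\vec{O}[m-1]} s$, so the strict preference transfers from the inductive hypothesis. If $t \notin p$, then the third disjunct of Def.~\ref{def:lex_update} directly delivers $s \preceq^{\textbf{lex}}_{\vec{O}[m]} t$; conversely, $t \preceq^{\textbf{lex}}_{\vec{O}[m]} s$ would require one of the three disjuncts to hold with $t$ on the left and $s$ on the right, but the first needs $t \in p$, the second needs $s \notin p$, and the third needs $t \in p$ and $s \notin p$, all of which fail. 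Hence $t \not\preceq^{\textbf{lex}}_{\vec{O}[m]} s$ and therefore $s \prec^{\textbf{lex}}_{\vec{O}[m]} t$.

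There is no real obstacle here; the only care needed is in the case $t \in p$, where one must check that both directions of the comparison between $s$ and $t$ are fully determined by the first disjunct of Def.~\ref{def:lex_update} and hence inherited from the previous step. The underlying reason for the fact is the characteristic behaviour of \textbf{lex} on sound inputs: updating with a $p$ true at $s$ either keeps $s$ in the same ``stratum'' as $t$ (when $t\in p$), preserving their relative order, or strictly promotes $s$ above $t$ (when $t\notin p$). Either way, an existing strict preference $s\prec t$ can never be weakened, in sharp contrast with the behaviour of \textbf{mini} illustrated in Section~\ref{sec:min_plaus}.
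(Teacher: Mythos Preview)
Your proof is correct and follows essentially the same approach as the paper's: induction on $n$, with the inductive step splitting on whether $t\in p$ and appealing directly to Def.~\ref{def:lex_update}. You spell out the disjunct analysis in slightly more detail than the paper does, but the structure and content of the argument are the same.
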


\begin{proof}
    Proof by induction on $n$. 
    [Base Case: $n=0$] Given that $\vec{O}[0]$ is empty, the updated space is the same as the initial space, thus $s\prec t$. 
    [Inductive Step] Assume that for all $n< m$, $s\prec^\textbf{lex}_{{\vec{O}[n]}} t$ in the corresponding update $\textbf{lex}(\textbf{PLAUS}(\textbf{S}), \vec{O}[n])$. Therefore, $s\prec^\textbf{lex}_{{\vec{O}[m-1]}}t$. Since the stream is sound with respect to $s$, we have $s\in p$, where $p\in O$ occurs in the stream $\vec{O}$ after the sequence $\vec{O}[m-1]$. Either $t\in p$, or $t\not\in p$. If $t\in p$, then $s\preceq^\textbf{lex}_{{\vec{O}[m]}} t$ and $t\not\preceq^\textbf{lex}_{{\vec{O}[m]}} s$ by Def.~\ref{def:lex_update}; hence, $s\prec^\textbf{lex}_{{\vec{O}[m]}} t$. If $t\not\in p$, then again $s\preceq^\textbf{lex}_{{\vec{O}[m]}} t$ and $t\not\preceq^\textbf{lex}_{{\vec{O}[m]}} s$ by Def.~\ref{def:lex_update}, and thus $s\prec^\textbf{lex}_{{\vec{O}[m]}} t$.
\end{proof}

\begin{fact}
     Let $\preceq$ be a  preorder on an epistemic space $\textbf{S}=\langle S, O\rangle$. If there is an $s\in S$ such that $s\in min_\preceq$, then for any sound data stream $\vec{O}$ for $s$, for any $n\in \mathbb{N}$, $s\in min_{\preceq^\textbf{lex}_{{\vec{O}[n]}}}\textbf{lex}(\textbf{PLAUS}(\textbf{S}), \vec{O}[n])$.
    \label{fact:min_preservation_sound}
\end{fact}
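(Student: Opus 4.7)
The plan is to proceed by induction on $n$, paralleling Facts~\ref{fact:pers_comparability}--\ref{fact:persistence_strict}. The base case $n=0$ is immediate: the updated space $\textbf{lex}(\textbf{PLAUS}(\textbf{S}),\vec{O}[0])$ coincides with $\textbf{PLAUS}(\textbf{S})$, so $s\in min_\preceq$ holds by hypothesis.

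For the inductive step, suppose $s\in min_{\preceq^\textbf{lex}_{\vec{O}[m-1]}}$ and let $p\in O$ be the observable occurring in position $m$ of $\vec{O}$. Soundness of $\vec{O}$ for $s$ gives $s\in p$. I need to show that for every $t\in S$ with $t\sim^\textbf{lex}_{\vec{O}[m]} s$, we have $s\preceq^\textbf{lex}_{\vec{O}[m]} t$. I would split on whether $t\in p$. If $t\notin p$, then since $s\in p$, the third clause of Def.~\ref{def:lex_update} directly yields $s\preceq^\textbf{lex}_{\vec{O}[m]} t$. If $t\in p$, then both $s,t\in p$, so neither the clause involving $\preceq_{\bar p}$ nor the third clause can relate $s$ to $t$ in $\preceq^\textbf{lex}_{\vec{O}[m]}$; the only applicable contribution comes from the first clause, which for worlds both lying in $p$ simply reproduces the previous preorder. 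Hence $t\sim^\textbf{lex}_{\vec{O}[m]} s$ forces $t\sim^\textbf{lex}_{\vec{O}[m-1]} s$, the inductive hypothesis gives $s\preceq^\textbf{lex}_{\vec{O}[m-1]} t$, and therefore $s\preceq^\textbf{lex}_{\vec{O}[m]} t$.

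The main bookkeeping step is the observation that, for two worlds both satisfying $p$, lexicographic upgrade with $p$ preserves mutual comparability and order: all of the change produced by \textbf{lex} is concentrated either on pairs straddling $p$ (handled by the third clause, which works in $s$'s favor since $s\in p$) or on pairs both outside $p$ (irrelevant here, since $s\in p$). Once this is verified, the inductive step closes by combining the immediate $t\notin p$ case with a single appeal to the inductive hypothesis in the $t\in p$ case, so I do not foresee any genuine obstacle beyond unpacking Def.~\ref{def:lex_update}.
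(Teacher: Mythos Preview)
Your proposal is correct and follows essentially the same inductive argument as the paper. The only difference is the organization of the case split in the inductive step: the paper first splits on whether $t\sim^\textbf{lex}_{\vec{O}[m-1]}s$ and then on $t\in p$, whereas you split directly on $t\in p$ and derive the needed comparability at step $m-1$ from the observation that \textbf{lex} with $p$ leaves the relative order of $p$-worlds unchanged; this yields a slightly tighter presentation but is the same proof.
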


\begin{proof}
    By induction on $n$. 
    [Base case] Follows from the fact that $s\in min_\preceq$. 
    [Inductive step] Assume that for all $n< m$, $s\in min_{\preceq^\textbf{lex}_{{\vec{O}[n]}}}$, where $\preceq^\textbf{lex}_{{\vec{O}[n]}}$ is the preorder in the 
 space $\textbf{lex}(\textbf{PLAUS}(\textbf{S}), \vec{O}[n])$. This is also true for $n=m-1$. Therefore, $s\in min_{\preceq^\textbf{lex}_{{\vec{O}[m-1]}}}$; hence, for all $t\sim^\textbf{lex}_{{\vec{O}[m-1]}} s$, $s\preceq^\textbf{lex}_{{\vec{O}[m-1]}}t$. 
 Now consider $t\sim^\textbf{lex}_{{\vec{O}[m]}}s$. There are two possibilities: either $t\sim^\textbf{lex}_{{\vec{O}[m-1]}}s$ or not, i.e., $t\not\sim^\textbf{lex}_{{\vec{O}[m-1]}}s$. 
 [Case 1: $t\sim^\textbf{lex}_{{\vec{O}[m-1]}}s$]  Since $\vec{O}$ is sound with respect to $s$, we have $s\in p$, where $p\in O$ is the observable occurring in $\vec{O}$ after the initial segment $\vec{O}[m-1]$. Either $t\not\in p$ or $t\in p$. If $t\not\in p$, then $s\preceq^\textbf{lex}_{{\vec{O}[m]}} t$ by Def.~\ref{def:lex_update}; if $t\in p$, then since $s\preceq^\textbf{lex}_{{\vec{O}[m-1]}} t$ by inductive hypothesis, we have $s\preceq^\textbf{lex}_{{\vec{O}[m]}} t$ by Def.~\ref{def:lex_update}.
 [Case 2: $t\not\sim^\textbf{lex}_{{\vec{O}[m-1]}}s$] Since $\vec{O}$ is sound with respect to $s$, we have $s\in p$, where $p\in O$ is the observable occurring in $\vec{O}$ after the initial segment $\vec{O}[m-1]$. Either $t\not\in p$ or $t\in p$. If $t\not\in p$, then $s\preceq^\textbf{lex}_{{\vec{O}[m]}}t$ by Def.~\ref{def:lex_update}; if  $t\in p$, then $s\not\sim^\textbf{lex}_{{\vec{O}[m]}}t$ by Def.~\ref{def:lex_update}. 
 Hence, for all $t\in S$, if $t\sim^\textbf{lex}_{{\vec{O}[m]}}s$, then $s\preceq^\textbf{lex}_{{\vec{O}[m]}} t$. Hence $s\in min_{\preceq^\textbf{lex}_{{\vec{O}[m]}}}$.
\end{proof}

We can now show that the existence of a general conditioning tell-tale map (Definition~\ref{def:gen_cond_telltale}) is necessary (Lemma~\ref{lemma:lex_necessary}, right to left direction of Proposition~\ref{prop:lex}) and sufficient (Lemma~\ref{lemma:lex_sufficient}, left to right direction of Proposition~\ref{prop:lex}) for a preorder to be appropriate to learn via \textbf{lex}.

\begin{lemma}
 Let $\preceq$ be a  preorder on an epistemic space $\textbf{S}=\langle S, O\rangle$. If $\preceq$ is appropriate to learn $\textbf{S}$ via \textbf{lex}, then there exists a conditioning tell-tale map for $S$ and $\preceq$.
     \label{lemma:lex_necessary}
 \end{lemma}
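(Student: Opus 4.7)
The plan is to mirror the $\Rightarrow$ direction of Proposition~\ref{prop:cond}, using the locking sequence lemma to extract the tell-tale map, but replacing the world-elimination arguments (which are specific to \textbf{cond}) with the monotonicity facts for \textbf{lex} already proved in Facts~\ref{fact:pers_comparability}--\ref{fact:min_preservation_sound}. Concretely, assume $\preceq$ is appropriate for learning $\textbf{S}$ via \textbf{lex}, so that $L^{\textbf{PLAUS}}_{\textbf{lex}}$ identifies every $s\in S$. By Lemma~\ref{lemma:lockingsequence}, for each $s\in S$ there exists a locking sequence $\sigma_s$ sound with respect to $s$. Define $F_s:=set(\sigma_s)$. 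Then $F_s$ is automatically finite, and $F_s\subseteq O_s$, so conditions (i) and (ii) of Def.~\ref{def:gen_cond_telltale} are immediate.

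For condition (iii), suppose $t\sim s$, $t\neq s$, and $F_s\subseteq O_t$. Pick any sound complete stream $\vec{O}$ for $s$ with $\vec{O}[n]=\sigma_s$. Since $F_s\subseteq O_t\cap O_s$, at every lex-update step along $\sigma_s$ both $s$ and $t$ lie in the same side of the observed proposition, so by Def.~\ref{def:lex_update} the relation between $s$ and $t$ is preserved: $t\simeq s$ would give $t\simeq^{\textbf{lex}}_{\sigma_s}s$, and $t\prec s$ would give $t\prec^{\textbf{lex}}_{\sigma_s}s$ (which also follows from Fact~\ref{fact:persistence_strict}, applied with $\vec{O}$ viewed as sound for $t$). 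In either case $s$ could not be the unique minimal element after $\sigma_s$, contradicting that $\sigma_s$ is a locking sequence. Hence $s\prec t$.

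For condition (iv), the main difficulty and the one genuinely different from the \textbf{cond} argument, assume for contradiction that some finite $F'_s\supseteq F_s$ with $F'_s\subseteq O_s$ and some $t\not\sim s$ satisfy $F'_s\subseteq O_t$ while no $v\prec t$ satisfies $F'_s\subseteq O_v$. Build a sound complete stream $\vec{O}$ for $s$ whose initial segment is $\sigma_s$ and, after $k\geq n$ steps, has $set(\vec{O}[k])=F'_s$. Since $\sigma_s$ locks on $s$, we get $L^{\textbf{PLAUS}}_{\textbf{lex}}(\textbf{S},\vec{O}[k])=\{s\}$, so $s\in min_{\preceq^{\textbf{lex}}_{\vec{O}[k]}}$. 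The crux is to show that $t$ is \emph{also} a minimal element of the updated preorder, contradicting uniqueness. For any $v$ with $v\sim^{\textbf{lex}}_{\vec{O}[k]}t$, I split on whether $F'_s\subseteq O_v$: if yes, then at every lex step both $t$ and $v$ lie on the same side of the observable, so the original relation between $t$ and $v$ is preserved verbatim, and the hypothesis ``no $v\prec t$ with $F'_s\subseteq O_v$'' combined with Fact~\ref{fact:persistence_incomp} (ruling out $t\not\sim v$ given $v\sim^{\textbf{lex}}_{\vec{O}[k]}t$) forces $t\preceq v$, hence $t\preceq^{\textbf{lex}}_{\vec{O}[k]}v$; if no, pick the latest $p_j$ in $\vec{O}[k]$ with $v\notin p_j$, noting $t\in p_j$, so after step $j$ we have $t\prec^{\textbf{lex}}_{\vec{O}[j]}v$, and in all subsequent steps $i>j$ both $t$ and $v$ satisfy $p_i$, so this strictness is preserved, giving $t\prec^{\textbf{lex}}_{\vec{O}[k]}v$.

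The delicate point that I expect to require the most care is this last bookkeeping on the iterated lex updates, namely the observation that (a) when both states satisfy every remaining observable the original relation is preserved unchanged, and (b) a single ``defeat'' of $v$ at step $j$ is never overturned because $t$ is never later pushed behind $v$. This is where Facts~\ref{fact:pers_comparability}--\ref{fact:persistence_strict} do not quite suffice off the shelf and I would spell out the argument by induction on the number of steps after $j$, using Def.~\ref{def:lex_update} directly.
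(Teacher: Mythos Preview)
Your proposal is correct and follows essentially the same approach as the paper: both use Lemma~\ref{lemma:lockingsequence} to extract $F_s:=set(\sigma_s)$, dispatch (i)--(ii) trivially, argue (iii) by preservation of the $s$--$t$ relation along $\sigma_s$, and for (iv) show that the offending $t$ is itself $\preceq^{\textbf{lex}}_{\vec{O}[k]}$-minimal, contradicting the locking property. The only cosmetic difference is that for (iv) you split first on whether $F'_s\subseteq O_v$ and handle the ``no'' branch by picking the latest missed observable, whereas the paper splits first on $v\preceq^{\textbf{lex}}_{\vec{O}[k]}t$ and derives $F'_s\subseteq O_v$ from that; the two organizations are equivalent, and your explicit ``latest $p_j$'' step is exactly what the paper compresses into the phrase ``otherwise $v$ would have become strictly less plausible than $t$ at some point and would have stayed so''.
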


 \begin{proof}
    Let $\preceq$ be a  preorder on an epistemic space $\textbf{S}=\langle S, O\rangle$ that is identifiable via lexicographic upgrade. Let the plausibility assignment $\textbf{PLAUS}$ be given that assigns $\preceq$ to $\textbf{S}$.
    Assume that $L^\textbf{PLAUS}_{\textbf{lex}}$ identifies $\textbf{S}$ in the limit. By Lemma~\ref{lemma:lockingsequence}, for every state $s\in S$, there exists a locking sequence $\sigma_s$. Consider an arbitrary state $s$. Let $F_s:=set(\sigma_s)$. We now show that $F_s$ is a conditioning tell-tale for $s$. 
    $F_s$ is finite, since $\sigma_s$ is finite, and thus satisfies point $(i)$ of Def.~\ref{def:gen_cond_telltale}. 
    $F_s\subseteq O_s$, since $\sigma_s$ is sound with respect to $s$. Thus, $F_s$ satisfies point $(ii)$ of Def.~\ref{def:gen_cond_telltale}.    
    We now show that $F_s$ satisfies $(iii)$ in Def.~\ref{def:gen_cond_telltale}. Assume by contradiction that there is some $t\sim s$ such that $F_s\subseteq O_t$, $s\not= t$ and $s\not\prec t$. 
    Since $s\not\prec t$, either $t\prec s$ or $t\simeq s$. 
    [Case 1: $t\prec s$] Consider a sound and complete data stream $\vec{O}$ for which there is an $n\in \mathbb{N}$ such that $\vec{O}[n]=\sigma_s$.  Since $t \prec s$, $F_s\subseteq O_t$ and $\vec{O}[n]$ is sound with respect to $s$, we have that $t\prec^\textbf{lex}_{{\vec{O}[n]}} s$ by Fact~\ref{fact:persistence_strict}.
    Hence, $s\not\in min_{\preceq^\textbf{lex}_{{\vec{O}[n]}}}\textbf{lex}(\textbf{PLAUS}(\textbf{S}), \vec{O}[n])$, and thus $s\not\in L^\textbf{PLAUS}_{\textbf{lex}}(\textbf{S},\vec{O}[n])$, which contradicts the fact that $\sigma_s$ is a locking sequence for $s$. 
    [Case 2: $t\simeq s$] Similar as the case above: $s$ is a minimal element if and only if $t$ is a minimal element. And hence we contradict the assumption that $\sigma_s$ is a locking sequence.
    
    We now show that $F_s$ satisfies $(iv)$ of Def.~\ref{def:gen_cond_telltale}. Assume in order to derive a contradiction that there exists a finite set $F'_s\supseteq F_s\subseteq O_s$ such that some $t\in S$ with $t\not\sim s$, and $F'_s\subseteq O_t$ has no $v$ such that $v\prec t$ and $F'_s\subseteq O_v$. Consider a data stream $\vec{O}$ such that there is an $n\in\mathbb{N}$ such that $\vec{O}[n]=\sigma_s$, and for which there is a $k\geq n$ such that $set(\vec{O}[k])=F'_s$. 
    Since $\vec{O}[n]$ is a locking sequence for $s$, $L^\textbf{PLAUS}_{\textbf{lex}}(\textbf{S},\vec{O}[k])=\{s\}$. Consider now $t\in S$ as above: for all $v\in S$, either $v\not\prec t$ or $F'_s\not\subseteq O_v$. We now show that $t$ is a minimal element of $\preceq^\textbf{lex}_{{\vec{O}[k]}}$ in the updated plausibility space $\textbf{lex}(\textbf{PLAUS}(\textbf{S}), \vec{O}[k])$. 
    Consider $v\sim^\textbf{lex}_{{\vec{O}[k]}} t$: either $v\preceq^\textbf{lex}_{{\vec{O}[k]}} t$ or $t\preceq^\textbf{lex}_{{\vec{O}[k]}} v$. If $v\preceq^\textbf{lex}_{{\vec{O}[k]}} t$, it must be the case that $F'_s\subseteq O_v$, otherwise $v$ would have become strictly less plausible than $t$ at some point and would have stayed so, since $F'_s\subseteq O_t$. But by assumption, it follows that $v\not\prec t$, and hence, either $v\not\sim t$ or, $t\preceq v$. If $v\not\sim t$, then $v\not\sim^\textbf{lex}_{{\vec{O}[k]}}t$, since both $F'_s\subseteq O_t$ and $F'_s\subseteq O_v$. But $v\sim^\textbf{lex}_{{\vec{O}[k]}} t$, and thus contradiction. Therefore, it must be the case that $t\preceq v$. Since both $t,v$ satisfy all the propositions occurring in $\vec{O}[k]$, we have that $t\preceq^\textbf{lex}_{{\vec{O}[k]}} v$. In all cases, for all $v\sim^\textbf{lex}_{{\vec{O}[k]}} t$, we have $t\preceq^\textbf{lex}_{{\vec{O}[k]}} v$, which implies that $t\in min_{\preceq ^\textbf{lex}_{{\vec{O}[k]}}}$, and thus $t\in L^\textbf{PLAUS}_{\textbf{lex}}(\textbf{S},\vec{O}[k])$. But since $\sigma_s$ is a locking sequence and $s\in \bigcap set(\vec{O}[k])$, we may conclude $L^\textbf{PLAUS}_{\textbf{lex}}(\textbf{S},\vec{O}[k])=\{s\}$. Contradiction.    
\end{proof}

\begin{lemma}
 Let $\preceq$ be a  preorder on an epistemic space $\textbf{S}=\langle S, O\rangle$. If there exists a conditioning tell-tale map for $S$ and $\preceq$, then $\preceq$ is appropriate to learn $\textbf{S}$ via \textbf{lex}.
     \label{lemma:lex_sufficient}
 \end{lemma}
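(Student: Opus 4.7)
The plan is to fix $s \in S$ and a sound, complete data stream $\vec{O}$ for $s$, and show that the canonical learner stabilises on $\{s\}$. Using finiteness of $F_s$ together with completeness of $\vec{O}$ (and property (ii), $F_s \subseteq O_s$), choose $n \in \mathbb{N}$ with $F_s \subseteq set(\vec{O}[n])$. I will then prove that for every $k \geq n$, $s$ is the unique $\preceq^{\textbf{lex}}_{\vec{O}[k]}$-minimal element.

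First I would establish that $s$ is minimal: let $t \sim^{\textbf{lex}}_{\vec{O}[k]} s$. If $t$ fails some $p \in set(\vec{O}[k])$, then soundness forces $s \in p$; at the step where $t$ first drops an observation satisfied by $s$, the lex clause ``$s \in p$ and $t \notin p$'' makes $s \prec^{\textbf{lex}} t$, and a routine induction (either $t$ satisfies the next observation, in which case $\preceq_p$ preserves the strict relation, or it does not, in which case the third clause reaffirms it) preserves this to stage $k$, yielding $s \preceq^{\textbf{lex}}_{\vec{O}[k]} t$. Otherwise $set(\vec{O}[k]) \subseteq O_t$; Fact~\ref{fact:persistence_incomp} rules out $t \not\sim s$ initially (which would contradict $t \sim^{\textbf{lex}}_{\vec{O}[k]} s$), so $t \sim s$. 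If $t = s$ the claim is trivial; otherwise property (iii) of the tell-tale gives $s \prec t$, and Fact~\ref{fact:persistence_strict} lifts this to $s \prec^{\textbf{lex}}_{\vec{O}[k]} t$.

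For uniqueness, suppose some $t \neq s$ is also minimal. If $t$ fails some $p \in set(\vec{O}[k])$, the previous inductive argument yields $s \prec^{\textbf{lex}}_{\vec{O}[k]} t$; combined with $s \sim^{\textbf{lex}}_{\vec{O}[k]} t$, this contradicts $t$'s minimality (which would demand $t \preceq^{\textbf{lex}}_{\vec{O}[k]} s$). Hence $set(\vec{O}[k]) \subseteq O_t$. If $t \sim s$ initially, property (iii) together with Fact~\ref{fact:persistence_strict} again yield $s \prec^{\textbf{lex}}_{\vec{O}[k]} t$, contradiction. If $t \not\sim s$, I apply property (iv) to $F'_s := set(\vec{O}[k])$, which is finite, contains $F_s$, and is contained in $O_s$ by soundness; this produces $v \in S$ with $v \prec t$ and $F'_s \subseteq O_v$. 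The inductive argument in Fact~\ref{fact:persistence_strict} goes through verbatim with $v$ replacing $s$, since it only requires the strictly stronger world to satisfy every observation in the relevant stream segment, a condition secured by $F'_s \subseteq O_v$. This gives $v \prec^{\textbf{lex}}_{\vec{O}[k]} t$, again contradicting $t$'s minimality.

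The principal obstacle is bookkeeping the non-total preorder correctly: uniqueness must be argued via the dedicated witness supplied by property (iv), since a minimal element need not be comparable to every other world and so one cannot rely on $s$ itself dominating every competitor. A secondary subtlety is justifying the use of Fact~\ref{fact:persistence_strict} on a world $v$ distinct from the one the stream is sound for; this is legitimate because the fact's inductive proof only exploits that the stronger world satisfies each observation, which is exactly what property (iv) delivers for $v$ up to stage $k$.
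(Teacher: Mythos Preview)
Your proposal is correct and follows essentially the same strategy as the paper's proof: pick $n$ with $F_s\subseteq set(\vec{O}[n])$, show $s$ is $\preceq^{\textbf{lex}}_{\vec{O}[k]}$-minimal via the case split on whether a competitor $t$ fails some observation (inductive persistence of $s\prec t$) or satisfies all of them (Fact~\ref{fact:persistence_incomp} plus property~(iii), or property~(iv) for incomparable $t$). The only cosmetic difference is that the paper first treats stage $n$ and then $k>n$ separately (invoking Fact~\ref{fact:min_preservation_sound} for the latter), whereas you argue uniformly for all $k\geq n$; your observation that Fact~\ref{fact:persistence_strict} transfers to the witness $v$ because its proof only needs the stronger world to satisfy each observation is exactly how the paper justifies $v\prec^{\textbf{lex}}_{\vec{O}[k]} t$ as well.
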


 \begin{proof}
     Let a space $\textbf{S}$ be given and let $\textbf{PLAUS}$ be a plausibility assignment assigning a preorder $\preceq$ to $\textbf{S}$ such that there exists a conditioning tell-tale map for $S$ and $\preceq$.

Consider a state $s\in S$ and a sound and complete data stream $\vec{O}$ for $s$. Consider a set $F_s$ for $s$ as in Def.~\ref{def:gen_cond_telltale}. Since the data stream is complete and $F_s$ is finite, there exists an $n\in\mathbb{N}$ such that $F_s\subseteq set(\vec{O}[n])$.

We now show that $L_{\textbf{lex}}^{\textbf{PLAUS}}$ outputs $s$ after updating on $\vec{O}[n]$, and forever after that. Consider a $t\not=s$ in $\textbf{lex}(\textbf{PLAUS}(\textbf{S}), \vec{O}[n])$. Either $t\sim^\textbf{lex}_{{\vec{O}[n]}}s$ or $t\not\sim^\textbf{lex}_{{\vec{O}[n]}} s$.  

[Case 1: $t\sim^\textbf{lex}_{{\vec{O}[n]}}s$] Either $t\not\sim s$ or $t\sim s$. If $t\not\sim s$, then by Fact~\ref{fact:persistence_incomp}, we have $set(\vec{O}[n])\not\subseteq O_t$, and thus there exists an $m\leq n$ such that $s\preceq^\textbf{lex}_{{\vec{O}[m]}} t$, and $t\not\preceq^\textbf{lex}_{{\vec{O}[m]}} s$ by Def.~\ref{def:lex_update}. Since $\vec{O}$ is sound with respect to $s$, we have that $s\prec^\textbf{lex}_{{\vec{O}[n]}} t$ by Fact~\ref{fact:persistence_strict}. 
If $t\sim s$, then there are two possibilities: $F_s\subseteq O_t$ or $F_s\not\subseteq O_t$. If $F_s\subseteq O_t$, then $s\prec t$ by assumption, and by Fact~\ref{fact:persistence_strict}, $s\prec^\textbf{lex}_{{\vec{O}[n]}} t$.
If $F_s\not\subseteq O_t$, then there exists a $p\in set(\vec{O}[n])$ such that $t\not\in set(\vec{O}[n])$. Therefore, $s\prec^\textbf{lex}_{{\vec{O}[m]}} t$ for some $m\leq n$. By Fact~\ref{fact:persistence_strict}, $s\prec^\textbf{lex}_{{\vec{O}[n]}} t$.

 [Case 2: $t\not\sim^\textbf{lex}_{{\vec{O}[n]}}s$] Since $t\not\sim^\textbf{lex}_{{\vec{O}[n]}}s$, we have that $t\not\sim s$ by Fact~\ref{fact:pers_comparability}. It must be the case that $set(\vec{O}[n])\subseteq O_t$, otherwise, $s\prec^\textbf{lex}_{{\vec{O}[n]}} t$. By assumption, we therefore know that there exists a $v\prec t$ such that $set(\vec{O}[n])\subseteq O_v$. Since both $set(\vec{O}[n])\subseteq O_v$ and $set(\vec{O}[n])\subseteq O_t$, we have that $v\prec^\textbf{lex}_{{\vec{O}[n]}} t$. Therefore, $t\not\in min_{\preceq^\textbf{lex}_{{\vec{O}[n]}}}$. 

 We therefore know that for all $t\sim s$, $s\preceq^\textbf{lex}_{{\vec{O}[n]}} t$, and there is no $t\not\sim^\textbf{lex}_{{\vec{O}[n]}} s$ such that for all $v\sim^\textbf{lex}_{{\vec{O}[n]}} t$, $t\preceq ^\textbf{lex}_{{\vec{O}[n]}} v$. Therefore $min_{\preceq^\textbf{lex}_{{\vec{O}[n]}}}=\{s\}$, and thus $L^\textbf{PLAUS}_\textbf{lex}(\textbf{S}, \vec{O}[n])=\{s\}$.

 We now need to show that for all $k\geq n$, $L^\textbf{PLAUS}_\textbf{lex}(\textbf{S}, \vec{O}[k])=\{s\}$. Assume that there is some $k$ for which $L^\textbf{PLAUS}_\textbf{lex}(\textbf{S}, \vec{O}[k])\not=\{s\}$. By Fact~\ref{fact:min_preservation_sound}, we know that $s\in L^\textbf{PLAUS}_\textbf{lex}(\textbf{S}, \vec{O}[k]) $. Therefore, there must exist a $t$ such that $t\not=s$ and $t\in L^\textbf{PLAUS}_\textbf{lex}(\textbf{S}, \vec{O}[k])$. 
 
 [Case 1: $t\sim^\textbf{lex}_{{\vec{O}[n]}} s$] Since $s$ is the uniquely minimal element after the sequence $\vec{O}[n]$, it must be the case that $s\prec^\textbf{lex}_{{\vec{O}[n]}} t$. Since $\vec{O}$ is sound, by Fact~\ref{fact:persistence_strict}, $s\prec^\textbf{lex}_{{\vec{O}[k]}} t$. Contradiction.
 [Case 2: $t\not\sim^\textbf{lex}_{{\vec{O}[n]}} s$] Then $t\not\sim s$, and $F_s\subseteq set(\vec{O}[n]) \subseteq set(\vec{O}[k])\subseteq O_t$, otherwise, $t$ would have become less plausible than $s$. But then, by assumption, there exists a $v$ such that $set(\vec{O}[k])\subseteq O_v$ and $v\prec t$. Since $set(\vec{O}[k])\subseteq O_v$, and $v\prec t$, we have $v\prec^\textbf{lex}_{{\vec{O}[k]}} t$ (since they always satisfy the same propositions, their relative plausibility never changes). Thus $t\not\in min_{\preceq^\textbf{lex}_{{\vec{O}[k]}}}$, hence $t\not\in L^\textbf{PLAUS}_\textbf{lex}(\textbf{S}, \vec{O}[k])$. Contradiction.   
 \end{proof}

\end{document}